\PassOptionsToPackage{prologue,dvipsnames,table}{xcolor} 
\documentclass[10pt,twocolumn,letterpaper]{article}

\usepackage{cvpr}              

\usepackage{tikz}
\usetikzlibrary{shapes.geometric}

\providecommand{\todo}[1]{{\color{red}#1}}

\usepackage{pgfplots}

\usepackage{amsmath,amsfonts,bm}

\def\eqref#1{equation~\ref{#1}}

\def\1{\bm{1}}

\DeclareMathAlphabet{\mathsfit}{\encodingdefault}{\sfdefault}{m}{sl}
\SetMathAlphabet{\mathsfit}{bold}{\encodingdefault}{\sfdefault}{bx}{n}

\usepackage{url}
\usepackage{booktabs}
\usepackage{adjustbox}
\usepackage{multicol}
\usepackage{multirow}
\usepackage{colortbl}
\usepackage{caption}
\usepackage{subcaption}
\usepackage{floatrow}
\usepackage{pgfplots}
\usepackage[accsupp]{axessibility}  
\usepackage[normalem]{ulem}
\usepackage{graphicx}
\usepackage{float}
\usepackage{derivative}
\usepackage{tabularx}
\usepackage{minted}
\usepackage{boldline}
\usepackage{algorithm}
\usepackage{algorithmic}
\usepackage{tikz}
\usepackage{wrapfig}
\usepackage{comicneue}
\usepackage{overpic}
\usepackage{natbib} 

\usepackage{amsmath}
\usepackage{amssymb}
\usepackage{amsthm}

\definecolor{cvprblue}{rgb}{0.21,0.49,0.74}
\usepackage[pagebackref,breaklinks,colorlinks,allcolors=cvprblue]{hyperref}

\newtheorem{theorem}{Theorem}
\newtheorem{lemma}{Lemma}

\newtheorem{corollary}{Corollary}

\providecommand{\todo}[1]{}
\newcommand\hide[1]{}
\newcommand\remove[1]{{}}

\newcommand{\replace}[2]{{#2}}
\newcommand\add[1]{{#1}}

\definecolor{vcacolor}{RGB}{123,50,210}

\definecolor{lbcolor}{RGB}{224, 135, 0}

\definecolor{gold}{RGB}{219, 198, 57}
\definecolor{silver}{RGB}{184, 183, 182}
\definecolor{bronze}{RGB}{191, 138, 57}

\newcolumntype{g}{>{\columncolor[HTML]{F0F0F0}}c}

\def\paperID{00000} 
\def\confName{CVPR}
\def\confYear{2027}

\title{Beyond Random Couplings: Contrastive Noise Alignment in Generative Flows}

\author{
\begin{tabular}{cccc}
Lennart Wittke\textsuperscript{1, 2} \textsuperscript{$\dagger$}&
Vinicius Azevedo\textsuperscript{2}
\end{tabular}
\\[1.2em]
\begin{tabular}{l}
    \textsuperscript{1}ETH Z\"{u}rich \hspace{25pt}
    \textsuperscript{2}Disney Research | Studios
\end{tabular}
}

\begin{document}

\twocolumn[{%
\begin{center}
    \renewcommand\twocolumn[1][]{#1}%
    \maketitle
    
    \small
    \resizebox{\textwidth}{!}{
    \begin{tikzpicture}[
        nodes={},
        dot/.style={circle, fill=gray!60, inner sep=1.2pt},
        hero/.style={circle, fill=black, inner sep=2pt},
        othernoise/.style={circle, fill=gray!80, inner sep=1.8pt},
        target/.style={star, star points=5, star point ratio=2, fill=blue!80, inner sep=1.5pt},
        unpairedtarget/.style={star, star points=5, star point ratio=2, fill=blue!40, inner sep=1.5pt}
    ]
    
    \pgfmathsetseed{42}

    \begin{scope}[shift={(1,0)}]
        \node[align=center] at (0.25, 2.5) {\textbf{A.} Independent Coupling\\$\pi(x_0, x_1) = \mu_0(x_0)\mu_1(x_1)$};
        
        \node at (0, 0.2) {\includegraphics[width=3.5cm]{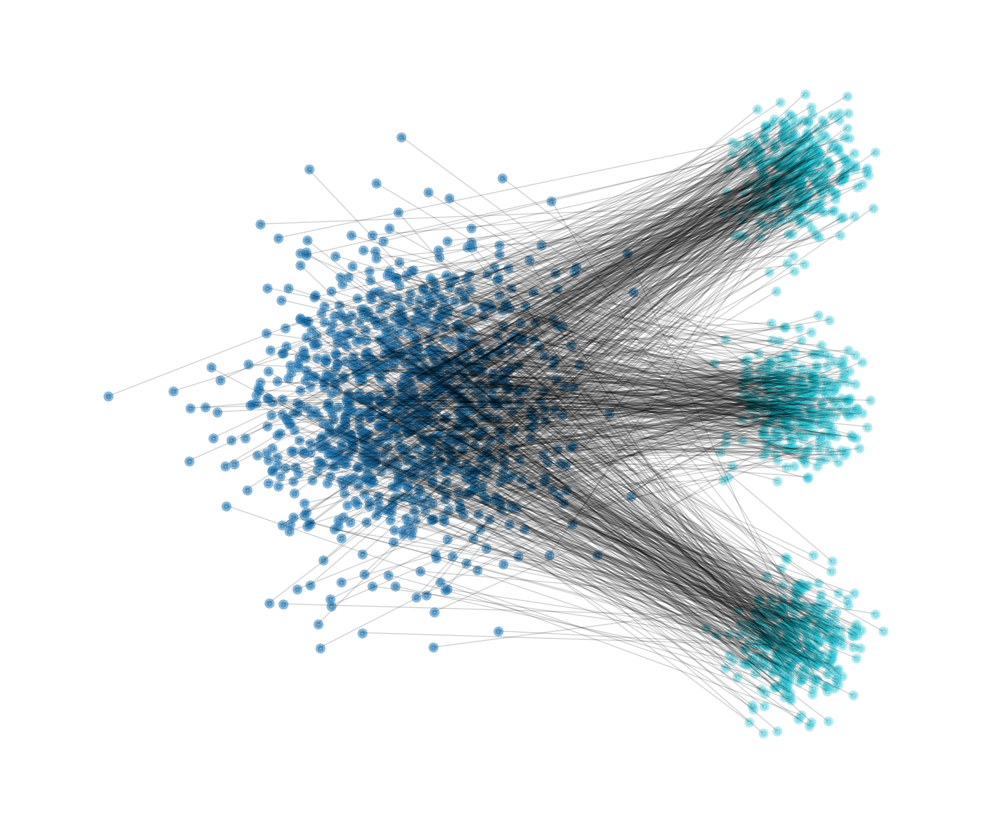}};
        \node[font=\sffamily\scriptsize, align=center] at (0, -1.6) {Unstructured Prior \\ (Crossed Paths)};
        
    \end{scope}

    \begin{scope}[shift={(6,0)}] 
        
        \node[align=center] at (3.75, 2.5) {\textbf{B.} Contrastive Noise Alignment \\ (CNA)};

        \node at (3.5, 0.2) {\includegraphics[width=3.8cm]{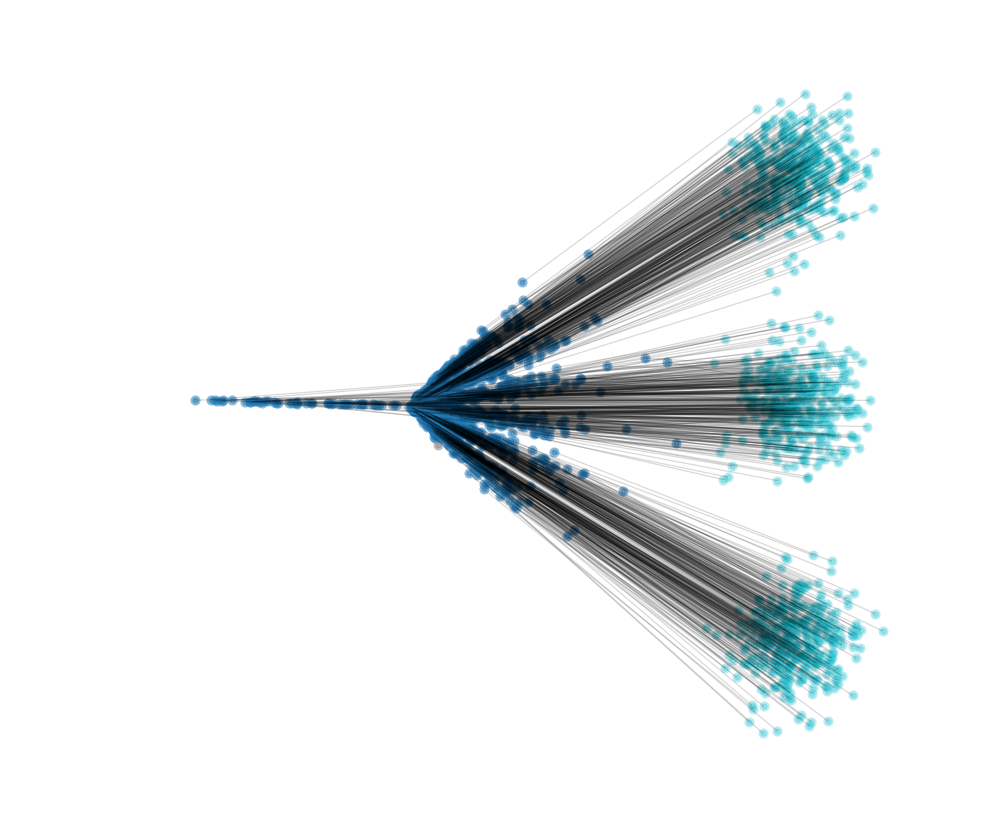}};
        \node[font=\sffamily\scriptsize, align=center] at (3.5, -1.6) {w/o Repulsion \\ (Prior Collapse)};

        \node at (7.5, 0.2) {\includegraphics[width=3.8cm]{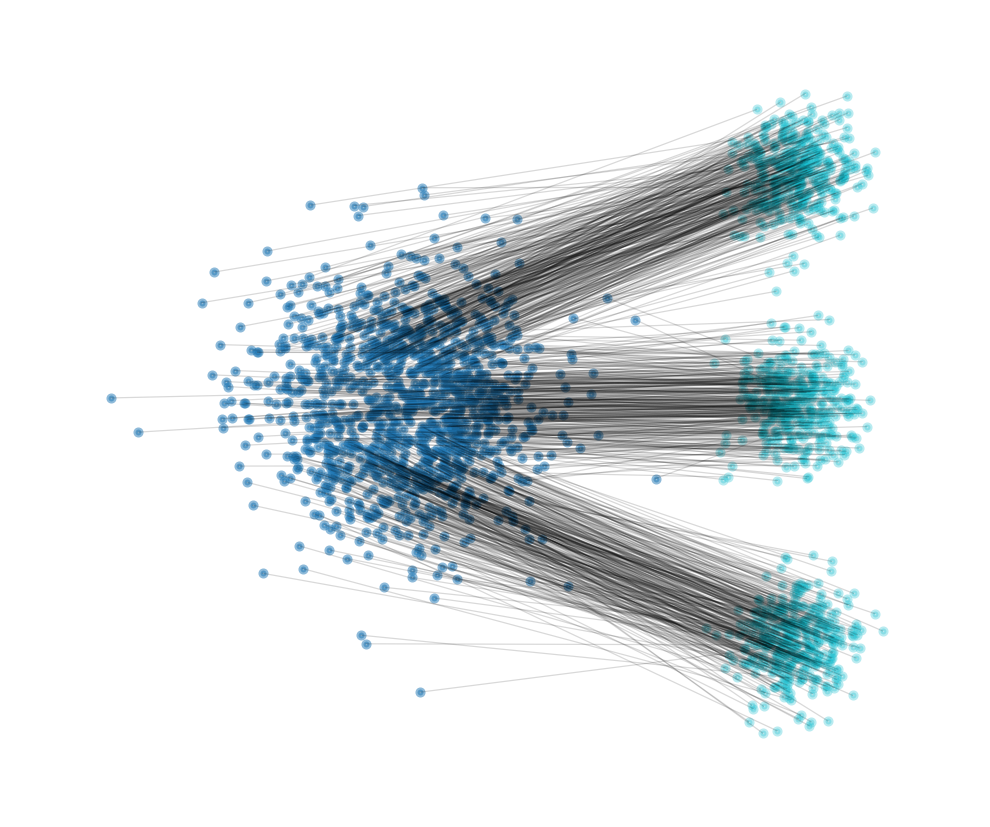}};
        \node[font=\sffamily\scriptsize, align=center] at (7.5, -1.6) {Full CNA \\ (Aligned Prior)};

        \begin{scope}[shift={(-0.2, -0.1)}, scale=0.98, transform shape]
            \draw[dashed, thick, gray!80] (0,0) circle (1.5cm);
            \node[anchor=north west, font=\fontfamily{ComicNeue-TLF}\selectfont\scriptsize, xshift=1pt] at (0,0) {Origin};
            \fill[black] (0,0) circle (1.5pt);
            
            \coordinate (Z) at (1.06, 1.06); 
            \node[hero] at (Z) {};
            \node[anchor=south, yshift=4pt] at (Z) {$\hat{z}_i$}; 
            
            \coordinate (Zj) at (-1.06, 1.06); 
            \node[othernoise] at (Zj) {};
            \node[anchor=south east, xshift=2pt] at (Zj) {$\hat{z}_j$};
            
            \coordinate (X) at (2.2, 1.8);
            \node[target] at (X) {};
            \node[anchor=south, yshift=2pt] at (X) {$x_i$};
            
            \coordinate (X_unpaired) at (-0.6, 2.0);
            \node[unpairedtarget] at (X_unpaired) {};
            \node[anchor=south] at (X_unpaired) {$x_j$};
            
            
            \draw[-stealth, thick, purple, shorten >=3pt] (Z) -- (0,0) 
                node[pos=0.5, anchor=north west, font=\fontfamily{ComicNeue-TLF}\selectfont\scriptsize, text=purple, xshift=-2pt, yshift=2pt] {$\mathcal{L}_{\text{norm}}$};
                
            \draw[-stealth, thick, blue, shorten >=3pt] (Z) -- (X) 
                node[pos=0.55, anchor=south east, font=\fontfamily{ComicNeue-TLF}\selectfont\scriptsize, text=blue, xshift=4pt, yshift=-2pt] {$\mathcal{L}_{\text{align}}^{+}$};
                
            \draw[dotted, thick, gray!50] (X_unpaired) -- (Z);
            \draw[-stealth, thick, dashed, blue, shorten >=2pt] (Z) -- (1.8, 0.6) 
                node[pos=0.8, anchor=south west, font=\fontfamily{ComicNeue-TLF}\selectfont\scriptsize, text=blue, xshift=-2pt, yshift=-1pt] {$\mathcal{L}_{\text{align}}^{-}$};
                
            \draw[stealth-stealth, thick, orange, shorten >=3pt, shorten <=3pt] (Z) -- (Zj) 
                node[pos=0.5, anchor=north, font=\fontfamily{ComicNeue-TLF}\selectfont\scriptsize, text=orange, yshift=-2pt] {$\mathcal{L}_{\text{entropy}}$};
        \end{scope}

    \end{scope}

    \end{tikzpicture}
    } 
    \vspace{0.3cm}
    \captionsetup{hypcap=false}
    \captionof{figure}{\textbf{Illustration of Contrastive Noise Alignment (CNA)}. We model the noise batch as interacting particles. \textbf{A}: Independent Couplings lack alignment with the data, resulting in unstructured couplings and highly crossed paths. \textbf{B}: CNA balances three forces on each particle $z_i$: (1) \textit{semantic} alignment attracts it to its target $x_i$ (\textcolor{blue}{$\mathcal{L}_{\text{align}}^{+}$}) and repels it from unmatched targets $x_j$ (\textcolor{blue}{$\mathcal{L}_{\text{align}}^{-}$}); (2) angular repulsion between noise particles maximizes surface entropy (\textcolor{orange}{$\mathcal{L}_{\text{entropy}}$}); and (3) radial gravity bounds the space (\textcolor{purple}{$\mathcal{L}_{\text{norm}}$}). This yields structured couplings with significantly reduced path crossings while maintaining an approximately Gaussian distribution (i.e., a \textit{relaxed prior}).}
    \label{fig:cno_mechanics}
    \vspace{0.5cm}
\end{center}
}]

\begin{abstract}
Diffusion and flow-matching models are typically trained by corrupting data through independently sampled Gaussian noise. While simple and scalable, this forward process induces arbitrary data-noise couplings, forcing the network to learn high-curvature transports between unrelated endpoints. Existing optimal-transport methods reduce this burden by reassigning fixed noise samples to data, but the source noise distribution itself remains passive. To address this, we introduce Contrastive Noise Alignment (CNA), a training-time method that creates dynamic, contrastive couplings by optimizing the noise representations directly. By modeling the noise batch as an interacting particle system, CNA employs a cross-modal InfoNCE objective to align noise particles with their paired data targets. To prevent spatial collapse, this alignment is regularized using an angular entropy term and a radial norm penalty. We show theoretically that this equilibrium asymptotically preserves Gaussian structures, maintaining tractability during inference. Empirically, CNA improves the alignment between noise and data, reduces flow curvature, and provides better generation quality with fewer required sampling steps. For few-step, pixel-space generation (2-4 NFEs), CNA reduces FID by over 50\% compared to standard rectified flow, and by at least 24\% against Optimal Transport baselines.
\end{abstract}

\section{Introduction}
\label{sec:intro}
Modern diffusion and flow-matching models are commonly trained by coupling each data sample with an independently drawn Gaussian noise sample. This choice is attractive for obvious reasons: the Gaussian prior is analytically tractable, easy to sample, maximum-entropy under fixed first and second moments, and compatible with simple stochastic noising processes. Yet this convenience hides a geometric weakness. The coupling between data and noise is largely arbitrary: each image is paired with a randomly sampled endpoint, and the model must learn a denoising or velocity field whose regression target averages over many such unstructured pairings. At large scale this works remarkably well, but it also places the burden of discovering useful transport structure entirely on the network.

Recent work in flow matching makes this issue explicit. Minibatch optimal-transport couplings \citep{pooladian_multisample_2023, tong_improving_2024, davtyan_faster_2025} reduce the complexity of the learned flow by replacing independent noise-data pairings with geometrically shorter assignments. Semidiscrete formulations \citep{mousavi-hosseini_flow_2026} further improve scalability by exploiting the finite dataset structure, while normalizing-flow distillation \citep{berthelot_coupling_2026} uses a pretrained invertible model to provide stronger coupling supervision. These methods suggest a common principle: generative training improves when the coupling between data and noise is not left entirely to chance. However, existing approaches either rely on simplified geometric costs \citep{pooladian_multisample_2023, tong_improving_2024, davtyan_faster_2025}, expensive transport solvers \citep{mousavi-hosseini_flow_2026}, or auxiliary models whose own training depends on prior coupling assumptions \citep{berthelot_coupling_2026}.

In this work, we ask a complementary question: instead of selecting a Gaussian endpoint independently from the data sample, can we construct a data-dependent endpoint that is still statistically indistinguishable from Gaussian noise? Rather than pushing the geometric burden entirely onto downstream regression or discrete assignments, we propose optimizing the noise representations directly at training time to construct data-dependent endpoints that remain statistically indistinguishable from standard Gaussian noise.

Motivated by contrastive learning \cite{oord2018representation}, we introduce \textit{Contrastive Noise Alignment (CNA)}, which models the noise batch as an interacting particle system subject to a balance of attractive and repulsive forces. As we show theoretically, the combination of these forces asymptotically preserves standard multivariate Gaussian structure, maintaining prior tractability during inference.

We evaluate whether the proposed approach improves sample quality under matched compute and straightens generative trajectories. Our experiments compare CNA against standard independent coupling (I-CFM) and minibatch OT (OT-CFM) baselines across unconditional generation tasks on CIFAR-10 and ImageNet32. By shifting the burden of structure discovery into the prior itself, CNA provides a principled alternative to arbitrary random couplings.

In summary, our core contributions are:
\begin{itemize}
    \item We propose \textit{Contrastive Noise Alignment (CNA)}, a training-time method that dynamically aligns noise representations with data targets to construct structured forward couplings while asymptotically preserving the standard Gaussian prior.
    \item We show that training flow models with these optimized noise particles leads to fewer sampling steps required to generate high-quality samples.
\end{itemize}

\section{Related Work}
\label{sec:related_work}

\paragraph{Flow Matching and Optimal Transport.} Continuous normalizing flows and diffusion models have driven recent advances in generative modeling \citep{ho2020denoising, song2021score, lipman2023flow}. Flow matching simplifies training by regressing vector fields along linear probability paths \citep{liu2023flow}. Under standard independent couplings, noise and data are paired randomly, causing marginal trajectories to cross and complicating velocity field regression. To mitigate this, methods like Minibatch OT \citep{tong_improving_2024, pooladian_multisample_2023} and global alignments \citep{kong2025alignflow, mousavi-hosseini_flow_2026} solve semi-discrete optimal transport problems to minimize transport cost. Additionally, \citet{albergo2023stochastic} introduce data-dependent couplings within stochastic interpolants. While these approaches find optimal permutations, they treat the underlying Gaussian prior as a rigid, static set. In contrast, our method actively reshapes the continuous spatial topology of the noise prior, moving beyond rigid matching to full distributional optimization.

\paragraph{Few-Step Sampling.} Straightening generation trajectories inherently accelerates inference. Consequently, extensive literature explores model distillation \citep{song2023consistency, yin2024one, luo2025learning}, modified training objectives like Shortcut Models and \textit{MeanFlow} \citep{frans2025one, geng2026mean}, and instance-aware discretization solvers \citep{yuan2026fewstep}. Because these techniques modify the network's regression objective or rely on pre-trained weights, they leave the foundational noise-data geometry unchanged. Our proposed prior optimization is fully orthogonal to these methods; by establishing a structurally aligned prior during the initial training phase, we produce inherently straighter paths that can further ease the downstream burden on few-step samplers.

\paragraph{Optimized Source Distributions.} Recent work demonstrates that optimizing the initial source distribution can also yield straighter trajectories. \citet{nayal2026mixflow} propose \textit{MixFlow}, which reduces path
curvature by mixing conditioned and unconditional sources, while \citet{kim2026better} learn a prior that adapts to specific conditioning variables. Unlike these methods, which introduce parameterized prior networks or mixture models, we rely entirely on an implicit, non-parametric contrastive update without adding architectural overhead. 

\paragraph{Contrastive Objectives in Generative Modeling.} The InfoNCE objective \citep{oord2018representation} is a staple of representation learning. Recently, \citet{betser2026infonce} proved that the population-level InfoNCE objective asymptotically induces a Gaussian distribution when properly regularized. Within generative modeling, framing batch optimization as a particle-based objective has shown promise for maintaining global coverage \citep{deng_generative_2026}. While contrastive losses in continuous flows remain largely underexplored, \citet{lee2026aligning} use them to improve text-to-image alignment, \citet{kim2025diverse} apply them at inference time to enhance sample diversity, and \citet{stoica2025contrastive} separate conditional trajectories via contrastive velocity penalties. Our work is the first to leverage the asymptotic Gaussianity of contrastive learning to optimize the source distribution directly during flow matching training.

\section{Preliminaries}
\label{sec:preliminaries}

\subsection{Conditional Flow Matching and Stochastic Interpolants}

Conditional Flow Matching \citep{lipman2023flow, albergo2023stochastic, liu2023flow, tong_improving_2024} provides a simulation-free approach to train continuous normalizing flows by constructing a time-dependent probability path between a tractable source noise distribution $\mu_0 \in \mathcal{P}(\mathbb{R}^d)$ (typically $\mathcal{N}(0, \mathbf{I})$) and a complex target data distribution $\mu_1 \in \mathcal{P}(\mathbb{R}^d)$. Formally, let $\Pi(\mu_0, \mu_1) \subset \mathcal{P}(\mathbb{R}^d \times \mathbb{R}^d)$ denote the set of joint distributions that have $\mu_0$ and $\mu_1$ as their marginals (\replace{e.g.,}{i.e.,} the set of valid \textit{couplings}). For any \remove{such} valid coupling $\pi \in \Pi$, \add{let $(x_0,x_1)\sim \pi$ denote a paired sample, where $x_0\in\mathbb{R}^d$ is drawn from the source marginal $\mu_0$ and $x_1\in\mathbb{R}^d$ is drawn from the target marginal $\mu_1$.} \add{Then, with} an interpolant function $\phi_t(x_0, x_1) \in \mathbb{R}^d$ \add{$, t \in [0, 1]$}, subject to the boundary conditions $\phi_0(x_0, x_1) = x_0$ and $\phi_1(x_0, x_1) = x_1$, the continuous probability path via the pushforward operation is defined as $\rho_t = (\phi_t)_\# \pi$, $\rho_t  \in \mathcal{P}(\mathbb{R}^d)$. By construction, this ensures $\rho_0 = \mu_0$ and $\rho_1 = \mu_1$.

The dynamics of this interpolant are governed by a probability flow ODE, driven by a time-varying marginal vector field $v_t$ \citep{albergo2023stochastic}:
\begin{equation}
    dx_t = v_t(x_t) dt, \quad x_0 \sim \mu_0.
    \label{eq:prob_flow_ode}
\end{equation}
Directly learning $v_t$ is intractable as it requires knowledge of the marginal density $\rho_t$. Instead, Flow Matching relies on constructing tractable \textit{conditional} vector fields $u_t(x_t \mid x_0, x_1)$ that generate the conditional paths $\phi_t(x_0, x_1)$. The marginal vector field is then recovered via marginalization: $v_t(x) = \mathbb{E}_{\pi} [u_t(x_t \mid x_0, x_1) \mid x_t = x]$.

A widely used specific choice is the \textit{linear interpolant}, defined as $x_t := \phi_t(x_0, x_1) = (1 - t)x_0 + t x_1$ \citep{lipman2023flow}. For this linear path, the conditional velocity term simplifies to a constant vector: $x_1 - x_0$. To learn this target vector field, a neural network $v_\theta(x_t, t)$ is optimized to predict the flow by minimizing the conditional flow matching objective:
\begin{equation}
    \mathcal{L}_{\text{CFM}}(\theta) = \mathbb{E}_{\substack{t \sim \mathcal{U}[0,1] \\ (x_0, x_1) \sim \pi}} \left[ \| v_\theta(x_t, t) - (x_1 - x_0) \|_2^2 \right].
    \label{eq:prob_flow_loss}
\end{equation}

Once $v_\theta$ accurately models the true field $v_t$, we can map pure noise into the target data distribution by evolving \eqref{eq:prob_flow_ode}.

The choice of the coupling $\pi$ significantly dictates the optimization dynamics. An independent coupling, $\pi(x_0, x_1) = \mu_0(x_0)\mu_1(x_1)$, pairs samples entirely at random. This stochastic construction averages out opposing directions, inducing highly curved marginal sampling trajectories.
\paragraph{OT-CFM. } To mitigate this, \textit{Optimal Transport (OT)} couplings explicitly pair geometrically closer samples by minimizing a transport cost, typically the squared $L_2$ distance:
\begin{equation}
    \pi_{\text{OT}} = \arg \min_{\pi \in \Pi(\mu_0, \mu_1)} \int \|x_0 - x_1\|_2^2 \, d\pi(x_0, x_1).
    \label{eq:batch_ot}
\end{equation}
To avoid intractable dataset-wide computations, practical implementations utilize \textit{Minibatch OT} \citep{tong_improving_2024, pooladian_multisample_2023}. By matching samples discretely per batch, this strategy empirically straightens the marginal vector fields.

\subsection{Information Noise-Contrastive Estimation (InfoNCE)}
\paragraph{Empirical InfoNCE Loss.} We measure alignment using cosine similarity. For any vector $v \in \mathbb{R}^d$, let $\hat{v} = v / \|v\|_2$ denote its $L_2$-normalized counterpart, such that the cosine similarity between two vectors is given by their dot product $\langle \hat{z}, \hat{x} \rangle$. By choosing this metric, the InfoNCE loss \citep{oord2018representation, chen2020simple, kim2021selfreg} evaluates alignment strictly based on angular distance, effectively operating on the unit hypersphere $\mathcal{S}^{d-1}$.

Let $\pi$ denote the joint distribution of paired samples. Given a batch of $N$ pairs $\{(z_i, x_i)\}_{i=1}^N$ drawn i.i.d. from $\pi$, the empirical InfoNCE loss is formulated as
\begin{equation}
    \mathcal{L}_{\text{InfoNCE}, \tau} = - \frac{1}{N} \sum_{i=1}^N \log \frac{\exp(\langle \hat{z}_i, \hat{x}_i \rangle / \tau)}{\sum_{j=1}^N \exp(\langle \hat{z}_i, \hat{x}_j \rangle / \tau)}
    \label{eq:empirical_infonce}
\end{equation}
where a fixed temperature hyperparameter $\tau > 0$ regulates distribution sharpness.

Intuitively, the numerator maximizes the cosine similarity of the true positive pair $(z_i, x_i)$. Simultaneously, the denominator computes a partition function over all available candidates $\{x_j\}_{j=1}^N$ in the batch, where the $j \neq i$ instances serve as negative repulsors.

\paragraph{Population InfoNCE.} As the batch size $N \to \infty$, the empirical InfoNCE loss converges to a population-level functional at an $\mathcal{O}(N^{-1/2})$ rate \citep{wang2020understanding}. In the symmetric case, where positive pairs $(z, x) \sim \pi$ share identical marginals $\mu$, this limit elegantly decomposes the objective into two distinct geometric terms:
\begin{equation}
\begin{aligned}
    \lim_{N \to \infty} & \left( \mathcal{L}_{\text{InfoNCE}, \tau} - \log N \right) \\
    &= - \frac{1}{\tau} \mathbb{E}_{(z,x)\sim\pi} [\langle \hat{z}, \hat{x} \rangle] + \Phi_\tau(\mu)
\end{aligned}
\label{eq:population_infonce}
\end{equation}
where the second term is defined as:
\begin{equation}
    \Phi_\tau(\mu) := \mathbb{E}_{z\sim\mu} \left[ \log \mathbb{E}_{x\sim\mu} [\exp(\langle \hat{z}, \hat{x} \rangle / \tau)] \right]
    \label{eq:phi_term}
\end{equation}

The first term measures the alignment of positive pairs tightly coupled by $\pi$. The second term, $\Phi(\mu)$, acts as a uniformity potential that depends solely on the marginal distribution.

\section{Proposed Method}
\label{sec:method}

\subsection{Relaxed Prior Couplings}

Standard continuous normalizing flows and optimal transport enforce strict marginal constraints $\pi \in \Pi(\mu_0, \mu_1)$, where $\mu_0 = \mathcal{N}(0, \mathbf{I})$ is the prior and $\mu_1 \in \mathcal{P}(\mathbb{R}^d)$ is the data. This rigid boundary forces the transport map to resolve the entirety of the structural mismatch between an isotropic prior and clustered data, inevitably inducing tangled, high-curvature sampling trajectories.

To resolve this bottleneck, we extend the classical formulation into the regime of \textit{semi-unbalanced optimal transport} \citep{chizat2018unbalanced}: rather than enforcing a strict boundary at $t=0$, we allow the source distribution to be an adaptive measure $\tilde{\mu}_0 \in \mathcal{P}(\mathbb{R}^d)$. We define \textit{relaxed prior couplings} as joint measures $\pi \in \Pi(\cdot, \mu_1)$, where the exact second marginal matches the data, but the first marginal, denoted $\tilde{\mu}_0$, remains a free variable. We argue that relaxing these rigid prior constraints is not a compromise but a necessity for fast sampling, significantly simplifying the transport map.

To systematically constrain the structural optimization of this adaptive source, we formulate a divergence-regularized transport objective. Assuming absolute continuity of the source with respect to the prior ($\tilde{\mu}_0 \ll \mu_0$) to ensure finite divergence, we define:
\begin{equation}
\begin{aligned}
    \mathcal{W}_{c,\text{KL}}(\mu_0, \mu_1) = \inf_{\pi \in \Pi(\cdot, \mu_1)} \Big[ &-\frac{1}{\tau}\mathbb{E}_{(z,x)\sim\pi} [\langle \hat{z}, \hat{x} \rangle] \\
    &+ \beta \text{KL}(\tilde{\mu}_0 \parallel \mu_0) \Big]
\end{aligned}
    \label{eq:relaxed_coupling}
\end{equation}
where $\beta > 0$ controls the regularization strength.

This objective strategically shifts the modeling burden. The expected cosine similarity actively aligns the source marginal $\tilde{\mu}_0$ to structurally mirror $\mu_1$, minimizing the required transport effort. Concurrently, the KL divergence term penalizes deviations from the reference Gaussian $\mu_0$. While $\beta \to \infty$ recovers a hard-constrained OT problem with
spherical \textit{cosine} cost, a finite $\beta$ yields an optimal intermediate source that perfectly balances semantic alignment with prior tractability.

\subsection{Contrastive Noise Alignment}

We propose \textit{Contrastive Noise Alignment (CNA)} as a tractable, implicit \textit{empirical} realization of relaxed prior couplings. Standard flow matching assumes a fixed prior $z \sim \mathcal{N}(0, \mathbf{I})$, forcing the velocity field to resolve severe structural mismatches via highly curved transport trajectories. CNA flips this paradigm: instead of treating a sampled noise batch as static vectors, we model it as a particle system. By actively optimizing the noise batch's geometry to reflect the semantic topology of the target data, CNA effectively minimizes the relaxed objective (Equation \ref{eq:relaxed_coupling}).

We achieve this through an equilibrium of three geometric forces (Figure \ref{fig:cno_mechanics}): (1) a \textit{cross-modal alignment force} (transport cost) pulling noise toward assigned targets, (2) an \textit{angular entropy force} maximizing directional diversity through repulsion, and (3) a \textit{radial gravitational force} anchoring vectors to the origin. Together, the latter two explicitly tether the batch to the standard Gaussian prior, approximating the angular part of the KL penalty (App. \ref{app:kl_connection}).

\paragraph{Initial Target Assignment.} Directly optimizing the continuous joint measure $\pi$ (Equation \ref{eq:relaxed_coupling}) is computationally prohibitive. To make this tractable, CNA employs an empirical relaxation by fixing a discrete initial coupling $\pi_{\text{init}}$ (drawn randomly or via minibatch OT, Equation \ref{eq:batch_ot}) to pair each noise particle $z_i$ with a target $x_i$. Crucially, $\pi_{\text{init}}$ remains frozen during training. Rather than updating the coupling matrix, CNA optimizes the spatial coordinates of $z_i$. Under this strictly fixed assignment, the targets $x_i$ act as stationary anchors, pulling the free-floating noise particles into a topologically aligned configuration.

\paragraph{Cross-Modal Alignment.} To approximate the \textit{transport cost $c(z, x)$} and actively sculpt the noise distribution, we align normalized noise particles $\hat{z}_i$ with assigned targets $\hat{x}_i$ via an InfoNCE objective. Given a batch of size $N$, the alignment loss is formulated as
\begin{equation}
    \mathcal{L}_{\text{align}, \tau} = - \frac{1}{N} \sum_{i=1}^{N} \log \frac{\exp(\langle \hat{z}_i, \hat{x}_i \rangle / \tau)}{\sum_{j=1}^{N} \exp(\langle \hat{z}_i, \hat{x}_j \rangle / \tau)}
\end{equation} 
where $\tau > 0$ is a temperature scaling parameter. By restricting the gradient updates  to the noise particles, this optimization mechanism allows the noise vectors to self-organize into semantic clusters around the fixed data points, aligning the topology of the initial noise space with the underlying semantic structure of the data manifold. This is a fundamental advantage over standard \textit{Minibatch Optimal Transport} \citep{pooladian_multisample_2023, tong_improving_2024}, which operates on a fixed, randomly sampled set of noise vectors and is thus restricted to point-to-point assignments. 

The repulsive force in the denominator also serves as a key safeguard. By keeping the noise vectors repelled from the broader data manifold, it prevents them from collapsing directly into the target data. This leaves the local attraction just strong enough to capture the fine-grained details unique to the assigned target.

\paragraph{Noise-Noise Repulsion.} To approximate the KL penalty and prevent spatial collapse, we apply a \textit{noise-noise repulsion} force to the $L_2$-normalized vectors $\hat{z}_i \in \mathcal{S}^{d-1}$:
\begin{equation}
    \mathcal{L}_{\text{entropy}, \gamma} = \frac{1}{N} \sum_{i=1}^{N} \log \sum_{j \neq i} \exp\left(\frac{\langle \hat{z}_i, \hat{z}_j \rangle}{\gamma}\right).
    \label{eq:uniformity}
\end{equation}
By penalizing high cosine similarities, this objective treats vectors as mutually repelling charged particles, where $\gamma >0$ is a separate temperature parameter controlling the repulsive field.

Crucially, this is not merely a dispersion heuristic. As we formally demonstrate in Appendix \ref{app:angular_entropy}, minimizing this term is equivalent to maximizing the empirical angular entropy $\hat{H}(\hat{Z})$, directly promoting the uniform distribution on the hypersphere in the asymptotic limit \citep{wang2020understanding}.

\paragraph{Gaussian Norm Regularizer.} Because $\mathcal{L}_{\text{align}}$ and $\mathcal{L}_{\text{entropy}}$ are scale-agnostic, optimizing them via discrete gradient steps provably (for GD) inflates vector magnitudes (see Appendix \ref{app:norm_growth}). To counteract this centrifugal expansion and to preserve the radial constraint of the prior, we apply a stabilizing $L_2$ penalty:
\begin{equation}
    \mathcal{L}_{\text{norm}} = \frac{1}{N} \sum_{i=1}^{N} \|z_i\|_2^2.
    \label{eq:norm}
\end{equation}
\paragraph{Final Objective.} The final objective of the CNA combines structural alignment ($\mathcal{L}_{\text{align}}$) with decoupled angular and radial divergence penalties ($\mathcal{L}_{\text{entropy}}$ and $\mathcal{L}_{\text{norm}}$):
\begin{equation}
    \mathcal{L}_{\text{total}} = \mathcal{L}_{\text{align}, \tau} + \beta \mathcal{L}_{\text{entropy}, \gamma} + \lambda \mathcal{L}_{\text{norm}}.
    \label{eq:total_loss}
\end{equation}
Iteratively minimizing this objective dynamically aligns the noise and target batches, yielding an optimized set of source particles $\{\tilde{z}_i\}_{i=1}^N$. These topologically aligned pairs $(\tilde{z}_i, x_i)$ replace the standard random samples $(x_0, x_1)$ in the Conditional Flow Matching objective (Equation \ref{eq:prob_flow_loss}). Importantly, our method only alters the training procedure, allowing inference to proceed exactly as usual. Implementation details are provided in Appendix \ref{app:implementation_details}.

\begin{figure*}[t]
    \centering
    \begin{subfigure}[b]{0.6\textwidth}
        \centering
        \small
        \scalebox{0.95}{ 
            \setlength{\tabcolsep}{2.0pt} 
            \begin{tabular}{@{} >{\columncolor{white}[0pt][\tabcolsep]}l *{7}{>{\centering\arraybackslash}p{0.9cm}} >{\columncolor{white}[\tabcolsep][0pt]\centering\arraybackslash}p{0.9cm} @{}}
            \toprule
            \multirow{2}{*}{\textbf{Method}} & \multicolumn{8}{c}{\textbf{FID $\downarrow$ (Euler NFEs)}} \\
            \cmidrule(lr){2-9} 
             & \textbf{1} & \textbf{2} & \textbf{4} & \textbf{8} & \textbf{16} & \textbf{32} & \textbf{64} & \textbf{128} \\
            \midrule
            I-CFM & 346.76 & 175.91 & 54.39 & 18.21 & 9.59 & 6.49 & 5.05 & 4.37 \\
            OT-CFM \citep{tong_improving_2024} & 229.98 & 91.75  & 30.00 & 14.20 & 9.08 & 6.58 & 5.12 & 4.36 \\
            \midrule
            CNA (ours, $\beta = 2)$ & 220.87 & 87.39 & 28.87 & 14.00 & 8.90 & 6.44 & 5.09 & 4.26 \\
            \midrule
            CNA + OT (ours, $\beta = 2$) & \textbf{148.54} & \textbf{56.40} & \textbf{22.64} & \textbf{12.28} & \textbf{8.23} & 6.36 & 5.51 & 5.25 \\
            \rowcolor{gray!10}
            CNA + OT (ours, $\beta = 5$) & 179.84 & 69.59 & 25.24 & 12.93 & 8.27 & \textbf{5.88} & \textbf{4.62} & \textbf{4.06} \\
            \bottomrule
            \end{tabular}
        }
        \caption{Unconditional generation results on CIFAR-10}
        \label{tab:results_tffid_simplified}
    \end{subfigure}
    \hfill 
    \begin{subfigure}[b]{0.37\textwidth}
        \centering
        \begin{tikzpicture}
            \begin{axis}[
                width=\linewidth, 
                height=4.3cm, 
                xmode=log, log basis x=2, ymode=log,
                xtick={1,2,4,8,16,32, 64,128}, xticklabels={1,2,4,8,16,32, 64,128},
                ytick={5, 10, 20, 50, 100, 200, 400}, yticklabels={5, 10, 20, 50, 100, 200, 400},
                ylabel={{FID} $\downarrow$},
                ylabel style={font=\footnotesize, yshift=-10pt},
                label style={font=\footnotesize}, tick label style={font=\scriptsize},
                legend pos=north east, 
                legend style={font=\tiny, fill=white, fill opacity=0.9, inner sep=0.1pt, nodes={yshift=0.5pt}},
                grid=both, grid style={dashed, gray!30},
                every axis plot/.append style={thick, mark size=1.25pt},
                ymin=5, ymax=390
            ]
            \addplot[gray, dashed, mark=*] coordinates {(1, 346.76)(2,175.91) (4,54.39) (8,18.21) (16,9.59) (32,6.49)};
            \addlegendentry{I-CFM}
            \addplot[blue, dashed, mark=triangle*] coordinates {(1, 229.98)(2,91.75) (4,30.00) (8,14.20) (16,9.08) (32,6.58)};
            \addlegendentry{OT-CFM}
            \addplot[green!50!black, mark=square*] coordinates {(1, 148.54)(2,56.4) (4,22.64) (8,12.28) (16,8.23)(32,6.36)};
            \addlegendentry{CNA + OT (ours, $\beta = 2$)}
            \end{axis}
        \end{tikzpicture}
        \caption{FID across Euler NFEs.}
        \label{fig:results_plot_log}
    \end{subfigure}
    
    \caption{\textbf{Quantitative results on CIFAR-10.} (a) Table showing FID scores; (b) Plot visualizing performance versus sampling budget.}
    \label{fig:combined_cifar_results}
\end{figure*}

\subsection{Theoretical Intuitions and Induced Gaussianity}
\label{sec:theoretical_intuitions}

\paragraph{The Core Intuition:} Contrastive Noise Alignment (CNA) is grounded in the \textit{Maxwell-Poincaré spherical central limit theorem} \citep{maxwell1860ii, poincare1912calcul, diaconis1984asymptotics}, which states that fixed-dimensional projections of a uniform distribution on a scaled hypersphere converge to a Gaussian as the dimension grows:
\begin{lemma}[Maxwell-Poincaré; \cite{diaconis1984asymptotics}]
\label{lem:maxwell_poincare}
Let $\sigma$ denote the uniform distribution on $\mathcal{S}^{d-1}$. As $d \to \infty$, for every fixed $k \ge 1$, the $k$-dimensional marginal of $u \sim \sigma$ satisfies
\begin{equation}
    \sqrt{d} u_k \Rightarrow \mathcal{N}(0, \mathbf{I}_k),
\end{equation}
where $u_k$ denotes the projection of $u$ onto a fixed $k$-dimensional subspace. The total variation distance between $\sqrt{d} u_k$ and $\mathcal{N}(0, \mathbf{I}_k)$ converges at a rate of $\mathcal{O}(d^{-1})$ \citep{diaconis1987dozen}.
\end{lemma}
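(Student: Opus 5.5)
We prove the Maxwell--Poincaré statement through the standard Gaussian representation of the uniform law on the sphere. Let $g=(g_1,\dots,g_d)\sim\mathcal{N}(0,\mathbf{I}_d)$. By rotation invariance of the standard Gaussian, $u:=g/\|g\|_2$ is distributed as $\sigma$ and is independent of $\|g\|_2$. Rotation invariance of $\sigma$ also lets us take the fixed $k$-dimensional subspace to be the span of the first $k$ coordinates. Writing $g_{(k)}=(g_1,\dots,g_k)$, this gives
\begin{equation*}
\sqrt{d}\,u_k \;=\; \frac{g_{(k)}}{\|g\|_2/\sqrt{d}}.
\end{equation*}

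\textbf{Weak convergence.} By the weak law of large numbers applied to the i.i.d.\ variables $g_i^2$ with mean $1$, we have $\|g\|_2^2/d\to 1$ in probability, and hence $\|g\|_2/\sqrt{d}\to 1$. The numerator $g_{(k)}$ has law exactly $\mathcal{N}(0,\mathbf{I}_k)$ for every $d$. Slutsky's theorem therefore gives $\sqrt{d}\,u_k\Rightarrow\mathcal{N}(0,\mathbf{I}_k)$, which is the first claim.

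\textbf{Total variation rate.} This is the main obstacle, because Slutsky gives no rate. We would work with explicit densities. For $d>k$, the projection $u_k$ has Lebesgue density on the unit ball of $\mathbb{R}^k$ proportional to $(1-\|y\|^2)^{(d-k-2)/2}$. Rescaling, $\sqrt{d}\,u_k$ has density
\begin{equation*}
f_d(x)=c_{d,k}\Bigl(1-\tfrac{\|x\|^2}{d}\Bigr)_+^{(d-k-2)/2}.
\end{equation*}
The normalizing constant $c_{d,k}$ is an explicit ratio of Gamma functions.

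The plan has four steps.
\begin{itemize}
\item[(i)] Expand $\log f_d(x)$ for $\|x\|^2\le \sqrt{d}$ using $\log(1-s)=-s-s^2/2+O(s^3)$. This gives $\log f_d(x)-\log\varphi_k(x)=\frac{k+2}{2d}\|x\|^2-\frac{\|x\|^4}{4d}+\log\bigl(c_{d,k}(2\pi)^{k/2}\bigr)+O(\|x\|^6/d^2)$.
\item[(ii)] Use Stirling's formula to show $\log\bigl(c_{d,k}(2\pi)^{k/2}\bigr)=O(k^2/d)$, with the constant cancelling the mean of the $x$-dependent $O(1/d)$ terms under $\varphi_k$.
\item[(iii)] Bound $\int_{\|x\|^2\le\sqrt d}|f_d-\varphi_k|$ by $\int\varphi_k\,|e^{h_d}-1|$, where $h_d=O\bigl((1+\|x\|^4)/d\bigr)$. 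Gaussian moments make this $O(k^2/d)$.
\item[(iv)] Show both tails $\{\|x\|^2>\sqrt d\}$ carry mass $e^{-c\sqrt d}$, using chi-square concentration for $\varphi_k$ and the elementary bound $(1-s)^{m}\le e^{-ms}$ for $f_d$.
\end{itemize}
Combining (iii) and (iv) yields $\mathrm{TV}=O(k^2/d)=O(d^{-1})$ for fixed $k$, matching the cited rate of \citet{diaconis1987dozen}.

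The delicate point is step (ii): the Stirling expansion of $c_{d,k}$ must be carried to order $1/d$ so that the constant terms cancel. If that bookkeeping becomes unwieldy, we would fall back on the Gaussian representation together with Pinsker's inequality. The relevant identity is that the KL divergence between the law of $\sqrt{d}\,u_k$ and $\mathcal{N}(0,\mathbf{I}_k)$ is $\mathbb{E}[\log(f_d/\varphi_k)]$ under $f_d$, which is computable through Beta-function moments. However, that route only gives $\mathrm{TV}=O(d^{-1/2})$ unless the KL is shown to be $O(d^{-2})$, so the direct density comparison remains the primary route.
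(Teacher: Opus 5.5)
Your proposal is correct. It also goes beyond the paper, which gives no proof of this lemma: it treats the statement as classical and delegates both the weak limit and the $\mathcal{O}(d^{-1})$ rate to Diaconis and Freedman.

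Your weak-convergence half (the representation $u=g/\|g\|_2$, the law of large numbers for $\|g\|_2^2/d$, and Slutsky) is the same mechanism the paper uses in its proof of Corollary~\ref{cor:induced_gaussianity}. It is also all the paper actually needs downstream, since the rate is only invoked informally.

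Your density comparison for the rate checks out. The density $c_{d,k}(1-\|x\|^2/d)_+^{(d-k-2)/2}$ is right. Moreover,
$\log\bigl(c_{d,k}(2\pi)^{k/2}\bigr)=\log\frac{\Gamma(d/2)}{\Gamma((d-k)/2)}-\frac{k}{2}\log\frac{d}{2}=-\frac{k(k+2)}{4d}+O(k^3/d^2)$,
which is exactly minus $\mathbb{E}_{\varphi_k}\bigl[\frac{(k+2)\|x\|^2}{2d}-\frac{\|x\|^4}{4d}\bigr]=\frac{k(k+2)}{4d}$, as you predicted.

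The cited result gives something your route does not: an explicit non-asymptotic bound of order $(k+3)/(d-k-3)$, valid for all $k\le d-4$ and linear in $k$, so it remains usable for $k=o(d)$. Your estimate gives an asymptotic $O(k^2/d)$ with unspecified constants. That is all the lemma requires, since $k$ is fixed.

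You do overrate the difficulty of step (ii). For fixed $k$, the mean cancellation is not needed for the total-variation bound; it is only what would sharpen your $k^2/d$ toward $k/d$. Step (iii) just requires the constant to be $O(1/d)$. That is forced by $\int f_d=\int\varphi_k=1$, once the $x$-dependent part of $h_d$ is $O\bigl((1+\|x\|^4)/d\bigr)$ and both tails are exponentially small. To avoid circularity, bound the tail of $f_d$ through the Gaussian representation rather than through $c_{d,k}$; no Stirling bookkeeping is then needed.

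The same normalization removes the first-order term in $\mathrm{KL}=\int\varphi_k e^{h_d}h_d$. This leaves $\tfrac12\mathbb{E}_{\varphi_k}[h_d^2]$ up to higher-order and exponentially small terms, i.e.\ $\mathrm{KL}=O(d^{-2})$. So your Pinsker fallback would also give $\mathrm{TV}=O(d^{-1})$, not merely $O(d^{-1/2})$.
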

By enforcing angular uniformity via $\mathcal{L}_{\text{entropy}}$ and preventing magnitudes from exploding via $\mathcal{L}_{\text{norm}}$, our objective provides the necessary structural forces to drive the noise particles toward a standard normal distribution in high dimensions.

To formalize this mechanism, we map our setup to the population InfoNCE objective of \citet{wang2020understanding} (i.e., Equation \ref{eq:population_infonce}) and a framework similar to that of \citet{betser2026infonce}. Let $\mu_0 = \mathcal{N}(0, \mathbf{I})$ be the initial noise distribution, and $\tilde{\mu}_0 = g_\# \mu_0$ be the pushforward distribution generated by our optimization loop $g$. \citet{betser2026infonce} prove that optimizing a bounded contrastive objective forces $\tilde{\mu}_0$ to minimize a regularized function, asymptotically inducing a Gaussian distribution.

\paragraph{Connection to Population InfoNCE.} First, we observe that our empirical entropy loss $\mathcal{L}_{\text{entropy}, \gamma}$ asymptotically recovers the unimodal repulsive component of the population InfoNCE objective (Equation \ref{eq:population_infonce}). Leveraging this equivalence, we can decouple the global objective into cross-modal and unimodal interactions, leading directly to the decomposition formalized in Theorem \ref{thm:asymptotic_decomp}:

\begin{theorem}[Asymptotic Decomposition]
\label{thm:asymptotic_decomp}
Let $\pi$ denote the joint distribution of matched positive pairs $(z, x)$, with respective marginals $\tilde{\mu}_0$ for the noise distribution and $\mu_1$ for the target data. Let $\hat{z}$ and $\hat{x}$ denote their $L_2$-normalized representations. For fixed parameters $\beta, \lambda > 0$, as the batch size $N \to \infty$, our combined objective converges a.s. to:
\begin{equation}
\begin{aligned}
    &\lim_{N \to \infty} \Big( \mathcal{L}_{\text{align}, \tau} + \beta \mathcal{L}_{\text{entropy}, \gamma} \\
    &\qquad\quad + \lambda \mathcal{L}_{\text{norm}} - (1 + \beta)\log N \Big) = \\
    &-\frac{1}{\tau} \mathbb{E}_{(z,x)\sim\pi} [\langle \hat{z}, \hat{x} \rangle] + \Phi_\tau(\tilde{\mu}_0, \mu_1) \\
    &+ \beta \Phi_\gamma(\tilde{\mu}_0) + \lambda\mathbb{E}_{z\sim\tilde{\mu}_0}\left[\|z\|^2\right]
\end{aligned}
\label{eq:limit_expansion_compact}
\end{equation}
where the unimodal repulsion $\Phi_\gamma(\tilde{\mu}_0)$ and the cross-modal repulsion $\Phi_\tau(\tilde{\mu}_0, \mu_1)$ are defined as:
\begin{align*}
    \Phi_\gamma(\tilde{\mu}_0) &= \mathbb{E}_{z_i\sim\tilde{\mu}_0} \left[ \log \mathbb{E}_{z_j\sim\tilde{\mu}_0} \left[ \exp(\langle \hat{z}_i, \hat{z}_j \rangle / \gamma) \right] \right] \\
    \Phi_\tau(\tilde{\mu}_0, \mu_1) &= \mathbb{E}_{z\sim\tilde{\mu}_0} \left[ \log \mathbb{E}_{x\sim\mu_1} \left[ \exp(\langle \hat{z}, \hat{x} \rangle / \tau) \right] \right]
\end{align*}
\end{theorem}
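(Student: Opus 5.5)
The proof splits the combined objective into its three summands and takes the almost-sure limit of each separately. Throughout, the batch $\{(z_i,x_i)\}_{i=1}^N$ is treated as i.i.d. from $\pi$, so the $z_i$ are i.i.d. from $\tilde{\mu}_0$ and the $x_j$ from $\mu_1$.

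\textbf{Norm term.} Assuming $\mathbb{E}_{\tilde{\mu}_0}\|z\|^2<\infty$, which holds whenever $\tilde{\mu}_0\ll\mu_0$ has finite KL with respect to the Gaussian (or can simply be taken as a standing assumption), the strong law of large numbers applies directly. It gives $\lambda\mathcal{L}_{\text{norm}}\to\lambda\mathbb{E}_{\tilde{\mu}_0}\|z\|^2$ a.s.

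\textbf{Alignment term.} Write
\[
\mathcal{L}_{\text{align},\tau}-\log N=-\frac{1}{N\tau}\sum_i\langle\hat z_i,\hat x_i\rangle+\frac1N\sum_i\log\Big(\frac1N\sum_{j}e^{\langle\hat z_i,\hat x_j\rangle/\tau}\Big).
\]
The first average converges a.s. to $-\frac1\tau\mathbb{E}_\pi\langle\hat z,\hat x\rangle$ by the SLLN, since the summand is bounded.

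For the second, define $F_N(\hat z)=\frac1N\sum_j e^{\langle\hat z,\hat x_j\rangle/\tau}$ and $F(\hat z)=\mathbb{E}_{\mu_1}e^{\langle\hat z,\hat x\rangle/\tau}$. The key observation is that the class $\{\hat x\mapsto e^{\langle u,\hat x\rangle/\tau}:u\in\mathcal{S}^{d-1}\}$ is uniformly bounded in $[e^{-1/\tau},e^{1/\tau}]$ and Lipschitz in $u$ on a compact set. It is therefore Glivenko–Cantelli, so $\sup_u|F_N(u)-F(u)|\to0$ a.s.

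Since $\log$ is Lipschitz on $[e^{-1/\tau},e^{1/\tau}]$, this yields $\sup_u|\log F_N(u)-\log F(u)|\to0$. Hence
\[
\frac1N\sum_i\log F_N(\hat z_i)=\frac1N\sum_i\log F(\hat z_i)+o(1)\ \text{a.s.},
\]
and one more application of the SLLN to the bounded function $\log F(\hat z)$ gives $\Phi_\tau(\tilde{\mu}_0,\mu_1)$.

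The diagonal term $j=i$, where $x_i$ is correlated with $z_i$, is the subtle point. It contributes at most $e^{1/\tau}/N$ to $F_N$, so it is absorbed by the uniform bound. This is why uniformity over $u$ is needed and pointwise convergence would not suffice.

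\textbf{Entropy term.} This term is handled the same way, with $\log N$ replaced by $\log(N-1)$. The difference $\log N-\log(N-1)\to0$ disappears in the limit, so subtracting $\beta\log N$ is harmless.

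Here the inner average $G_{N}^{(i)}(u)=\frac1{N-1}\sum_{j\ne i}e^{\langle u,\hat z_j\rangle/\gamma}$ uses the same sample as the outer point. The uniform GC bound applied to the full empirical measure of the $\hat z_j$ still controls it: removing one term changes the average by $O(1/N)$ uniformly. Thus $\frac1N\sum_i\log G_N^{(i)}(\hat z_i)\to\Phi_\gamma(\tilde{\mu}_0)$ a.s.

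\textbf{Assembling the limit.} Summing the three limits, weighted by $1$, $\beta$ and $\lambda$, gives the right-hand side of the statement, with the total normalizer $(1+\beta)\log N$.

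\textbf{Main obstacle.} The hard step is justifying the exchange of the inner empirical average with the outer one. This is the uniform law of large numbers on the sphere, together with the handling of the dependent diagonal and self-excluded terms.

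My first attempt would be the Lipschitz/compactness route: take an $\epsilon$-net of $\mathcal{S}^{d-1}$, apply the SLLN at the finitely many net points, and extend by Lipschitz continuity. This route exploits that cosine similarities are bounded in $[-1,1]$, so no moment conditions are needed for either repulsion term.
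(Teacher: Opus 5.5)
Your argument is correct and has the same skeleton as the paper's proof. The paper also splits the objective into its three summands. It rewrites $\mathcal{L}_{\text{align},\tau}-\log N$ so that the inner sum becomes an empirical mean $\frac{1}{N}\sum_j e^{\langle \hat z_i,\hat x_j\rangle/\tau}$. It absorbs the self-exclusion in $\mathcal{L}_{\text{entropy},\gamma}$ via $\log\frac{N-1}{N}\to 0$, and it treats $\mathcal{L}_{\text{norm}}$ by the law of large numbers.

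Where you differ is the interchange of the inner and outer averages. The paper justifies this step only by ``the Law of Large Numbers and the Continuous Mapping Theorem,'' adapted from Wang and Isola's Theorem~1. In that setting the anchor is independent of the negatives and the loss is an expectation, so pointwise convergence plus bounded convergence suffices. In the batch objective here, the random function $F_N$ (resp.\ $G_N^{(i)}$) changes with $N$ and is evaluated at points from the same sample. Convergence at each fixed $u$ therefore does not by itself give almost-sure convergence of $\frac{1}{N}\sum_i \log F_N(\hat z_i)$.

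Your uniform law of large numbers on $\mathcal{S}^{d-1}$ supplies exactly this missing step; it rests on compactness, Lipschitz dependence on $u$, and the bounds $e^{\pm 1/\tau}$, $e^{\pm 1/\gamma}$. It also disposes of the correlated diagonal term and the leave-one-out sums as uniform $O(1/N)$ perturbations. The real difficulty is this shared-sample, triangular-array structure rather than the diagonal term alone, but your bound handles both at once.

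The paper's version buys brevity; yours actually proves the almost-sure claim for the empirical objective. For the norm term you need only $\mathbb{E}_{\tilde\mu_0}\|z\|^2<\infty$. This does follow from finite KL to $\mathcal{N}(0,\mathbf{I})$, by the Donsker--Varadhan inequality applied to $\epsilon\|z\|^2$ with $\epsilon<1/2$.
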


\begin{proof} The full derivation is provided in Appendix \ref{sec:proof_theorem_1}.
\end{proof}

\paragraph{Induced Gaussianity. }
Let $\sigma$ denote the uniform distribution on $\mathcal{S}^{d-1}$. Crucially, \citet[Appendix A]{wang2020understanding} prove that the uniformity potential $\Phi_\gamma(\tilde{\mu}_0)$ is uniquely minimized at the uniform distribution on the sphere (i.e., $\tilde{\mu}_0 = \sigma$).

\begin{corollary}[Induced Gaussianity as $\beta \to \infty$]
\label{cor:induced_gaussianity}
Under the conditions of Theorem \ref{thm:asymptotic_decomp} with initialization $\mu_0 = \mathcal{N}(0, \mathbf{I})$ and a scale-agnostic objective ($\lambda = 0$), let $\tilde{\mu}_0$ be the asymptotic global minimizer as $\beta \to \infty$. Then, for any fixed $k \ge 1$, the $k$-dimensional projections of the unnormalized representations $z \sim \tilde{\mu}_0$ converge in distribution to $\mathcal{N}(0, \mathbf{I}_k)$ as $d \to \infty$.
\end{corollary}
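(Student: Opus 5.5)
The plan is to start from the limiting objective of Theorem \ref{thm:asymptotic_decomp} with $\lambda = 0$ and divide by $\beta$:
\[
J_\beta(\tilde{\mu}_0) = \frac{1}{\beta}\Big(-\tfrac{1}{\tau}\mathbb{E}_\pi[\langle \hat z,\hat x\rangle] + \Phi_\tau(\tilde{\mu}_0,\mu_1)\Big) + \Phi_\gamma(\tilde{\mu}_0).
\]
Every term depends on $\tilde{\mu}_0$ only through the law of $\hat z$, and the bracketed cross-modal part is bounded in absolute value by $2/\tau$, since the cosines lie in $[-1,1]$. Hence $J_\beta \to \Phi_\gamma$ uniformly over all source measures as $\beta \to \infty$.

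Next I would show that the angular laws of minimizers of $J_\beta$ converge to the minimizer of $\Phi_\gamma$. By \citet[Appendix A]{wang2020understanding}, $\Phi_\gamma$ has the unique minimizer $\sigma$. The sphere is compact, so the family of angular laws is tight. $\Phi_\gamma$ is weakly continuous on $\mathcal{P}(\mathcal{S}^{d-1})$ because its integrand is bounded and continuous. A standard $\Gamma$-convergence argument then applies: uniform convergence plus a continuous limit with a unique minimizer implies that any weak limit point of near-minimizers equals $\sigma$. I therefore interpret ``asymptotic global minimizer'' as the measure whose angular part $\hat z$ is distributed as $\sigma$.

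The next step handles the radial part. Since $\lambda = 0$, the objective does not determine $\|z\|$, so the radius has to come from the initialization $\mu_0 = \mathcal{N}(0,\mathbf{I})$. The angular optimization map $g$ acts only on directions, so the radius is carried over unchanged. I would make this explicit by modelling $g$ as $z \mapsto \|z\|\, h(\hat z)$. Under $\mu_0$, the norm $\|z\|$ is independent of $\hat z$ and $\|z\|/\sqrt d \to 1$ in probability (chi concentration). The limiting law is then $z = r u$ with $u \sim \sigma$, $r \sim \chi_d$, and $r$ independent of $u$, which is exactly $\mathcal{N}(0,\mathbf{I})$ for every $d$. If one only keeps $\|z\|/\sqrt d \to 1$ in probability, the result follows by writing $z_k = (r/\sqrt d)\,\sqrt d\, u_k$. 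Lemma \ref{lem:maxwell_poincare} gives $\sqrt d\, u_k \Rightarrow \mathcal{N}(0,\mathbf{I}_k)$, and Slutsky's theorem then gives the claim for every fixed $k$.

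The main obstacle is this radial step, because the scale-agnostic objective places no constraint on $\|z\|$. Appendix \ref{app:norm_growth} even warns that gradient descent inflates the norms. I would state the radial hypothesis explicitly: the optimization loop preserves norms, or more weakly $\|z\|/\sqrt d \to 1$ in probability. Under that hypothesis the $\Gamma$-convergence and Maxwell--Poincaré/Slutsky steps are routine.
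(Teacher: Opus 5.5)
Your proposal is correct and follows essentially the same route as the paper's proof: reduce the $\beta\to\infty$ objective to $\Phi_\gamma$, take the angular law to be its unique minimizer $\sigma$, carry the $\chi_d$ radius over from the Gaussian initialization, and combine $r/\sqrt{d}\to 1$ with Lemma~\ref{lem:maxwell_poincare} via Slutsky's theorem. Your uniform-convergence and compactness argument makes rigorous the $\beta\to\infty$ reduction that the paper only asserts, and the radial hypothesis you leave explicit is what the paper supplies by idealizing the inner loop as a continuous-time gradient flow, which is tangential to the sphere (Appendix~\ref{app:norm_growth}) and therefore preserves every $\|z\|$.
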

The result follows from uniquely minimizing $\Phi_\gamma$ at $\sigma$ via Lemma \ref{lem:maxwell_poincare}, paired with the initial gaussian thin-shell concentration preserved under scale-invariance. See Appendix \ref{sec:proof_corollary_1} for details. \textit{(Note: As discussed in Section \ref{sec:method} and Appendix \ref{app:norm_growth}, empirical discrete optimization necessitates a norm regularizer to counteract the outward norm drift.)}

\paragraph{Practical Considerations.} While practical settings operate with finite dimensions and batch sizes, these theoretical limits still serve as valuable motivating approximations for training. Formally, the deviation of the expected contrastive loss from its population limit decays as $\mathcal{O}(M^{-1/2})$ in the number of negative samples $M$\citep{wang2020understanding}. Meanwhile, the high-dimensional projection error scales as $\mathcal{O}(d^{-1})$ \cite{diaconis1987dozen}. When $\beta$ is sufficiently large, our experiments confirm that the optimized source distribution remains sufficiently close to a standard Gaussian prior.

\section{Experiments}
We evaluate CNA for unconditional image generation on CIFAR-10 and ImageNet32. We also perform an extensive ablation study to assess the contribution of each model component.

\subsection{CIFAR-10 (Unconditional)}
\label{subsec:cifar10}

\paragraph{Experimental Setup. }To validate our approach, we first test CNA on the standard CIFAR-10 benchmark, which contains $50,000$ training images of resolution $32\times32$. I-CFM, OT-CFM, and CNA training differ exclusively in the way noise-data pairs are sampled, all other aspects of FM training stay identical, using the setup from \citet{tong_improving_2024}. All reported baseline numbers are reproduced. During inference, we sample purely from the exact Gaussian prior, $x_0 \sim \mathcal{N}(0, \mathbf{I})$. The full training details are provided in Appendix \ref{app:implementation_details}.
\paragraph{Evaluation Metrics.} We measure sample quality via Fr{\'e}chet Inception Distance (FID) \citep{heusel2017gans} and path straightness via flow curvature \citep{lee2023minimizing}. To test few-step generation, we report FID using a deterministic Euler solver across 1 to 128 number of function evaluations (NFEs).
\paragraph{Results.} We present our quantitative results in Figure \ref{fig:combined_cifar_results} and Table \ref{tab:flow_curvature}. Optimizing the noise prior via CNA consistently outperforms both I-CFM and OT-CFM across all generation budgets. At $2$ sampling steps, CNA reduces FID from $175.91$ to $87.39$, a ${\sim}50\%$ reduction. This lead remains robust across few-step Euler budgets: CNA reaches 4-step and 8-step FIDs of 28.87 and 14.00, respectively ($\beta = 2$). Concurrently, CNA reduces flow curvature (Euler-128 $\kappa$ drops ${\sim}32\%$ from $0.0479$ to $0.0324$), confirming that our aligned prior inherently straightens generation trajectories.
\begin{table}[H]
    \centering
    \small
    \setlength{\tabcolsep}{3pt} 
    \scalebox{0.95}{
    \begin{tabular}{| l | c | c | c | c |} 
        \hline
        & I-CFM & OT-CFM & CNA ($\beta = 2$) & CNA+OT ($\beta = 2$) \\
        \hline
        Euler-128 & 0.0479 & 0.0326 & 0.0324 & \textbf{0.0251} \\
        Dopri5* & 0.0531 & 0.0290 & 0.0280 & \textbf{0.0185} \\
        \hline
    \end{tabular}
    }
    \caption{Curvature $\kappa$ ($\downarrow$) for \textbf{CIFAR-10}. \textit{Note*: Dopri5 $\kappa$ deviates from the uniform-time definition.}}
    \label{tab:flow_curvature}
\end{table}
The Initialization of CNA with OT couplings (CNA+OT) provides further gains across all evaluation budgets. Using $\beta = 2$, this combination reduces the $2$-step FID to $56.40$ (an ${\sim}38\%$ improvement over OT-CFM) and achieves the lowest curvature ($\kappa=0.0251$ at Euler-128). At $\beta = 5$, CNA+OT also improves the high-NFE limit, reaching an FID of $4.06$ at Euler-128. This suggests that the optimized prior successfully retains the essential characteristics of a standard Gaussian prior if $\beta$ is chosen high enough, while also providing a beneficial warm-start. By starting from an already efficient transport plan, the contrastive forces avoid large-scale structural reshuffling and can focus purely on fine-grained alignment. Qualitative samples reflect these metrics, producing much sharper images at low NFEs (Figure~\ref{fig:nfe_visual_comparison}). Since CNA operates entirely during training, these benefits come at zero additional inference cost.
\subsection{ImageNet32 (Unconditional)}
To test the scalability of our approach on a significantly more complex and diverse data distribution, we extend our unconditional evaluation to the ImageNet32 dataset, which comprises approximately $1.28$ million training images downsampled to a $32 \times 32$ resolution. As with the CIFAR-10 experiments, we maintain a fixed network architecture, regression objective, and training budget across all baselines, strictly isolating the impact of the noise coupling.
\begin{table}[H]
    \centering
    \small
    \caption{Unconditional generation results on \textbf{ImageNet32}.}
    \label{tab:results_imagenet32}
    \vskip 0.05in
    \setlength{\tabcolsep}{3.5pt} 
    \scalebox{0.95}{
    \begin{tabular}{@{} >{\columncolor{white}[0pt][\tabcolsep]}l *{4}{c} >{\columncolor{white}[\tabcolsep][0pt]}c @{}}
    \toprule
    \multirow{2}{*}{\textbf{Method}} & \multicolumn{5}{c}{\textbf{FID $\downarrow$ (Euler NFEs)}} \\
    \cmidrule(lr){2-6} 
     & \textbf{1} & \textbf{2} & \textbf{4} & \textbf{8} & \textbf{16} \\
    \midrule
    I-CFM & 419.92 & 205.99 & 69.51 & 25.05 & 12.14 \\
    OT-CFM  \citep{tong_improving_2024}& 252.88 & 102.63 & 36.95 & 15.89 & 9.41  \\
    \midrule
    CNA (ours, $\beta = 2$)                       & 250.87 & 105.67  & 39.93 & 17.85 & 10.89   \\
    \midrule
    CNA + OT (ours, $\beta = 5)$                  & \textbf{166.07} & \textbf{71.78} & \textbf{26.86} & 15.03 & 12.82 \\
    CNA + OT (ours, $\beta = 30)$                  & 230.85 & 93.33 & 34.02 & \textbf{14.78} & \textbf{8.96} \\
    \bottomrule
    \end{tabular}
    }
\end{table}
\paragraph{Results.} Table~\ref{tab:results_imagenet32} summarizes our quantitative evaluation on ImageNet32. CNA combined with OT initialization demonstrates substantial improvements in the highly constrained few-step regime. At $\beta=5$, our method drops the 1-step Euler FID from $252.88$ (OT-CFM baseline) to $166.07$—a massive ${\sim}34\%$ reduction. This robust performance extends through the 2-step and 4-step budgets, reducing FIDs to $71.78$ (a ${\sim}30\%$ improvement) and $26.86$, respectively. For higher step counts, increasing the entropy regularization ($\beta=30$) secures the high-NFE limit, pushing the 16-step FID down to $8.96$. As observed in previous experiments, these generation gains are structurally supported by a concurrent reduction in flow curvature (Table~\ref{tab:flow_curvature_imagenet}).
\begin{table}[H]
    \centering
    \small
    \setlength{\tabcolsep}{3pt} 
    \scalebox{0.95}{
    \begin{tabular}{| l | c | c | c | c |} 
        \hline
        & I-CFM & OT-CFM & CNA ($\beta = 2$) & CNA+OT ($\beta = 10$) \\
        \hline
        Euler-128 & 0.0600 & 0.0416 & 0.0412 & \textbf{0.0381} \\
        Dopri5* & 0.0629 & 0.0344 & 0.0326 & \textbf{0.0283} \\
        \hline
    \end{tabular}
    }
    \caption{Curvature $\kappa$ ($\downarrow$) for \textbf{ImageNet32.}}
    \label{tab:flow_curvature_imagenet}
\end{table}
\subsection{Analysis}
\paragraph{Ablations.}We perform a step-by-step ablation on CIFAR-10 to evaluate the contribution of each mechanism (Table~\ref{tab:ablation_table}). Applying only the contrastive alignment term pulls the noise distribution toward the data topology. This improves few-step generation, lowering the Euler-4 FID from $30.00$ to $22.60$ (at $\tau=0.01$). However, this unconstrained attraction induces spatial collapse (e.g., prior hole problem \citet{hao2023coupled}), degrading adaptive solver quality (Dopri5 FID rises to $10.52$). 

\begin{table}[t]
    \centering
    \scalebox{0.88}{
    \begin{tabular}{@{}lccc@{}}
        \toprule
        \multirow{2}{*}{\textbf{Configuration}} & \multicolumn{3}{c}{\textbf{FID $\downarrow$}} \\
        \cmidrule(l){2-4}
        & Euler-4 & Euler-10 & Dopri5 \\
        \midrule
        
        OT-CFM \textit{(baseline)} \citep{tong_improving_2024} & 30.00 & 12.13 & 3.66 \\
        
        \midrule
        \quad +Alignment ($\tau = 0.1$) & 25.32 & 15.81 & 15.84 \\
        \quad +Alignment ($\tau = 0.01$)\textsuperscript{def} & \textbf{22.60} & 12.89 & 10.52 \\
        \addlinespace
        \quad +Align + Ent ($\tau, \gamma = 0.01$)\textsuperscript{def} & 26.14 & 11.64 & 3.60 \\
        
        \midrule
        CNA + OT (Align + Ent + $\mathcal{L}_{\text{norm}}$) & 25.24 & \textbf{11.02} & \textbf{3.56} \\
        
        \bottomrule
    \end{tabular}}
    \caption{Step-by-step performance impact of Alignment, Entropy, and Norm regularization on \textbf{CIFAR-10}, starting from the OT-CFM baseline. Using $\beta$ = 5. Default parameters are marked with \textsuperscript{def}.}
    \label{tab:ablation_table}
\end{table}
To address this, Entropy Repulsion acts as a uniformity potential against prior collapse, while Norm Regularization ($\mathcal{L}_{\text{norm}}$) prevents the optimized vectors from escaping the $\mathcal{N}(0, \mathbf{I})$ shell. As the final table row indicates, combining all three components yields an optimal balance: The complete CNA formulation accelerates few-step generation (reducing Euler-4 and Euler-10 FIDs to $25.24$ and $11.02$) while restoring asymptotic quality (Dopri5 FID of $3.56$).
\begin{figure*}[t]
    \centering
    \small

    \begin{subfigure}[t]{0.48\textwidth}
        \centering
        \small
        \setlength{\tabcolsep}{1.5pt}
        \scalebox{0.8}{
        \begin{tabular}{ccccc}
            & \small \textbf{1 NFE} & \small \textbf{2 NFE} & \small \textbf{4 NFE} & \small \textbf{8 NFE} \\

            \rotatebox{90}{\makebox[1.1cm][c]{\small \textbf{I-CFM}}} &
            \includegraphics[width=0.18\linewidth]{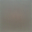} &
            \includegraphics[width=0.18\linewidth]{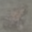} &
            \includegraphics[width=0.18\linewidth]{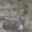} &
            \includegraphics[width=0.18\linewidth]{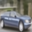} \\

            \noalign{\vskip 2pt}

            \rotatebox{90}{\makebox[1.1cm][c]{\small \textbf{OT-CFM}}} &
            \includegraphics[width=0.18\linewidth]{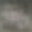} &
            \includegraphics[width=0.18\linewidth]{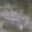} &
            \includegraphics[width=0.18\linewidth]{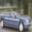} &
            \includegraphics[width=0.18\linewidth]{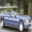} \\

            \noalign{\vskip 2pt}

            \rotatebox{90}{\makebox[1.1cm][c]{\small \textbf{Ours}}} &
            \includegraphics[width=0.18\linewidth]{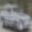} &
            \includegraphics[width=0.18\linewidth]{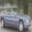} &
            \includegraphics[width=0.18\linewidth]{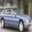} &
            \includegraphics[width=0.18\linewidth]{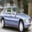} \\
        \end{tabular}
        }
        \caption{CIFAR-10}
    \end{subfigure}
    \hfill
    \begin{subfigure}[t]{0.48\textwidth}
        \centering
        \setlength{\tabcolsep}{1.5pt}
        
        \newcommand{\imgidx}{00512}
        
        \scalebox{0.8}{
        \begin{tabular}{ccccc}
            & \small \textbf{1 NFE} & \small \textbf{2 NFE} & \small \textbf{4 NFE} & \small \textbf{8 NFE} \\

            \rotatebox{90}{\makebox[1.1cm][c]{\small \textbf{I-CFM}}} &
            \includegraphics[width=0.18\linewidth]{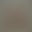} &
            \includegraphics[width=0.18\linewidth]{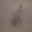} &
            \includegraphics[width=0.18\linewidth]{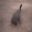} &
            \includegraphics[width=0.18\linewidth]{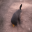} \\

            \noalign{\vskip 2pt}

            \rotatebox{90}{\makebox[1.1cm][c]{\small \textbf{OT-CFM}}} &
            \includegraphics[width=0.18\linewidth]{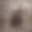} &
            \includegraphics[width=0.18\linewidth]{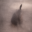} &
            \includegraphics[width=0.18\linewidth]{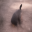} &
            \includegraphics[width=0.18\linewidth]{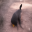} \\

            \noalign{\vskip 2pt}

            \rotatebox{90}{\makebox[1.1cm][c]{\small \textbf{Ours}}} &
            \includegraphics[width=0.18\linewidth]{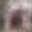} &
            \includegraphics[width=0.18\linewidth]{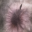} &
            \includegraphics[width=0.18\linewidth]{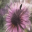} &
            \includegraphics[width=0.18\linewidth]{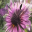} \\
        \end{tabular}
        }
        \caption{ImageNet32}
    \end{subfigure}
    \caption{\textbf{Qualitative comparison} across different step counts. Compared to the baselines, our method (CNA + OT) maintains structural coherence even at very low NFEs (1 and 2 steps).}
    \label{fig:nfe_visual_comparison}
\end{figure*}
\begin{figure}[H]
    \centering
    
    \begin{subfigure}[b]{0.49\textwidth} 
        \centering
        \begin{tikzpicture}
            \begin{axis}[
                width=\linewidth, 
                height=3.7cm, 
                xmode=log, 
                xtick={0.1, 0.5, 1, 2, 5, 10}, xticklabels={0.1, 0.5, 1, 2, 5, 10},
                xlabel={$\beta$}, 
                ylabel={{Curvature} $\kappa$ $\downarrow$},
                ylabel style={font=\footnotesize, yshift=-2pt},
                ymode=log, log basis y=10,
                label style={font=\footnotesize}, tick label style={font=\scriptsize},
                legend pos=north east, 
                legend style={font=\tiny, fill=white, fill opacity=0.9, inner sep=0.0pt, nodes={yshift=0.5pt}},
                grid=both, grid style={dashed, gray!30},
                every axis plot/.append style={thick},
                ymin=0.0045, ymax=0.05
            ]
            \addplot[red, dotted, thick, mark=none, forget plot] coordinates {(0.1,0.0092) (5,0.0092)};
            \addplot[blue, dotted, thick, mark=none, forget plot] coordinates {(0.1,0.0183) (5,0.0183)};
            \addplot[green!50!black, dotted, thick, mark=none, forget plot] coordinates {(0.1,0.0290) (5,0.0290)};
            \addlegendentry{Euler-4}
            \addlegendentry{Euler-10}
            \addlegendentry{Dopri5}
            \addplot[red, mark=*, mark size=1.5pt] coordinates {(0.1,0.0056)(0.5,0.0057)(1, 0.0059)(2,0.0064) (5,0.0075)(10,0.0082)};
            \addplot[blue, mark=triangle*, mark size=1.5pt] coordinates {(0.1,0.0112)(0.5,0.0113)(1,0.0117)(2,0.0126) (5,0.0147)(10,0.0167)};
            \addplot[green!50!black, mark=square*, mark size=1.5pt] coordinates {(0.1,0.0161)(0.5,0.0162)(1.0,0.0169)(2,0.0184)(5,0.0221)(10,0.0258)};
            \addlegendimage{no markers, black, dotted, thick}
            \end{axis}
        \end{tikzpicture}
        \label{fig:ablation_curvature}
    \end{subfigure}
    \hfill
    \begin{subfigure}[b]{0.49\textwidth} 
        \centering
        \begin{tikzpicture}
            \begin{axis}[
                width=\linewidth, 
                height=3.7cm, 
                xmode=log, 
                xtick={0.1, 0.5, 1, 2, 5, 10}, xticklabels={0.1, 0.5, 1, 2, 5, 10},
                ymode=log, log basis y=10, 
                ylabel={{FID} $\downarrow$},
                xlabel={$\beta$}, 
                ytick={5, 10, 25, 50, 100},
                yticklabels={5, 10, 25, 50, 100},
                ylabel style={font=\footnotesize, yshift=-15pt, xshift=0pt},
                label style={font=\footnotesize}, tick label style={font=\scriptsize},
                legend pos=north east, 
                legend style={font=\tiny, fill=white, fill opacity=0.9, inner sep=1.0pt, nodes={yshift=0.5pt}},
                grid=major, grid style={dashed, gray!30},
                every axis plot/.append style={thick},
                ymin=3, ymax=35
            ]
            \addplot[red, dotted, thick, mark=none, forget plot] coordinates {(0.1,30.00) (10,30.00)};
            \addplot[blue, dotted, thick, mark=none, forget plot] coordinates {(0.1,14.20) (10, 14.20)};
            \addplot[green!50!black, dotted, thick, mark=none, forget plot] coordinates {(0.1,4.36) (10,4.36)};
            \addplot[red, mark=*, mark size=1.5pt] coordinates {(0.1,20.08)(0.5,20.75)(1, 21.41)(2,22.64) (5,25.24)(10,26.06)};
            \addplot[blue, mark=triangle*, mark size=1.5pt] coordinates {(0.1,11.20)(0.5,11.15)(1,10.77)(2,10.57) (5,11.02)(10,10.82)};
            \addplot[green!50!black, mark=square*, mark size=1.5pt] coordinates {(0.1,10.16)(0.5,9.08)(1.0,7.31)(2,4.91)(5,3.56)(10,3.33)};
            \addlegendimage{no markers, black, dotted, thick}
            \end{axis}
        \end{tikzpicture}
        \label{fig:ablation_fid}
    \end{subfigure}
    \vspace{-0.4cm}
    \caption{Trajectory curvature and FID with regularization parameter $\beta$ on \textbf{CIFAR-10}. \textit{Dotted lines denote the OT-CFM baseline.}}
    \label{fig:ablation_plots}
\end{figure}
\paragraph{Noise-Target Alignment.}To verify structural alignment, we measure the average batch cosine similarity for the matched noise-image pairs (Table~\ref{tab:batch_correlation}). While I-CFM yields zero correlation and OT-CFM provides an initial alignment ($0.0379$), our full CNA+OT ($\beta = 5$) method drives this further to $0.0597$, an over $1.5\times$ improvement over OT-CFM. A similar trend holds on ImageNet32, where CNA+OT ($\beta = 10$) increases alignment from $0.0491$ to $0.0707$. This confirms that CNA+OT systematically reduces \textit{directional displacement} during transport.
\begin{table}[H]
    \centering
    \small
    \setlength{\tabcolsep}{3pt} 
    \scalebox{0.95}{
    \begin{tabular}{| l | c | c | c | c | c |}
        \hline
        & I-CFM & OT-CFM & CNA ($\beta = 2$) & CNA+OT\\
        \hline
        CIFAR-10   & 0.000 & 0.0379 & 0.0453 & \textbf{0.0597} \\
        ImageNet32 & 0.000 & 0.0491 & 0.0561 & \textbf{0.0707} \\
        \hline
    \end{tabular}
    }
    \caption{\textbf{Batch Correlation.} Average cosine similarity $\theta$ between coupled pairs.}
    \label{tab:batch_correlation}
\end{table}
\paragraph{Effect of Regularization Weight $\beta$.}Figure~\ref{fig:ablation_plots} illustrates how the weight $\beta$ balances trajectory straightness and prior fidelity. We train different models with various values of $\beta$ on CIFAR-10. Keeping $\beta$ low allows the noise particles to align closely with the data. This creates exceptionally straight paths, resulting in peak few-step performance. The downside is that the prior drifts from a true Gaussian, which hurts high-step quality on solvers like Dopri5. Conversely, higher $\beta$ values reduce path straightness but benefit Dopri5 performance, even outperforming the baselines.
\paragraph{Post-Hoc Path Regulation.}Interestingly, the flow matching network can be explicitly conditioned on the regularization parameter $\beta$ during training. This turns $\beta$ into a \textit{post-hoc inference parameter}, allowing users to tune the trade-off between path straightness and prior adherence at generation time without retraining the model (for more details, see Table \ref{tab:beta_results} and App. \ref{sec:beta_conditioning_appendix}).
\begin{table}[H]
\centering
\small
\caption{Inference profile of the $\beta$-conditioned model on \textbf{CIFAR-10}. Lower $\beta$ values optimize for few-step generation.}
\label{tab:beta_results}
\scalebox{0.95}{
\begin{tabular}{lcccc}
    \toprule
    \multirow{2}{*}{\textbf{Inference $\beta$}} & \multicolumn{4}{c}{FID $\downarrow$} \\
    \cmidrule(lr){2-5}
    & Euler-2 & Euler-4 & Euler-10 & Dopri5 \\
    \midrule
    OT-CFM & 91.75 & 30.00 & 12.13 & 3.66 \\
    \midrule
    $\beta = 1.5$ & \textbf{55.04} & \textbf{22.19} & \textbf{10.48} & 6.13 \\
    $\beta = 3.0$ & 62.11 & 23.83 & 10.54 & 4.25 \\
    \rowcolor{gray!10}
    $\beta = 6.0$ & 70.75 & 25.88 & 11.07 & \textbf{3.49} \\
    \bottomrule
\end{tabular}}
\end{table}
\paragraph{Computational Overhead.}While an inner optimization loop inevitably adds cost, the footprint of CNA remains manageable. Achieving effective alignment in just $T=4$ steps, it introduces a modest ${\sim}2\%$ overhead on both CIFAR-10 and ImageNet32 (see Appendix \ref{sec:comp_efficiency}). CNA scales quadratically ($\mathcal{O}(N^2)$) with batch size $N$, keeping it cheaper than the cubic ($\mathcal{O}(N^3)$) exact \textit{Minibatch OT} solvers \citep{pooladian_multisample_2023,tong_improving_2024}. Consequently, in the combined CNA + OT variant, the OT initialization remains the primary runtime bottleneck. Additionally, by operating entirely online per minibatch, CNA bypasses offline pre-computation over
the full dataset, which can reach hours at scale \citep{mousavi-hosseini_flow_2026}.

\section{Conclusion}
\label{sec:conclusion}

In this work, we introduced \textit{Contrastive Noise Alignment (CNA)}, a novel framework that optimizes the initial noise prior in flow matching models. CNA aligns source noise representations directly with the target data while approximately maintaining a global Gaussian distribution. In practice, this alignment significantly boosts generation quality when the sampling budget is limited. By offloading the task of discovering transport structure from the neural network to the prior itself, CNA provides a more direct, principled path toward learning geometrically aware forward processes.

\paragraph{Limitations \& Ongoing Work.} A key limitation of our method is the tension between aggressively aligning the noise prior and preserving its Gaussian properties. We are currently exploring new regularization methods to better constrain the adapted prior to the Gaussian manifold. Furthermore, while CNA optimizes the noise distribution, it is still initialized from random noise, meaning the starting point can be sub-optimal. A promising future direction is \textit{deterministic Gaussianization} \citep{laparra2011iterative}: constructing the prior directly from the data to bypass random sampling entirely.

Beyond these theoretical questions, we are currently extending CNA to conditional and latent generation, as well as testing our contrastive loss within semantic feature spaces like DINOv2 \citep{oquab2023dinov2}. Finally, because our approach is entirely orthogonal to the regression objective, we are excited to explore how it integrates with trajectory-straightening techniques, such as Shortcut Models and Mean Flow \citep{frans2025one, geng2026mean}.

\clearpage

\section{Acknowledgments} We thank Pascal Chang for helpful discussions and feedback. We also acknowledge the support of ETH Zurich in providing access to the Euler cluster for this research.

{
    \small
    \bibliographystyle{ieeenat_fullname}
    \bibliography{main}
}

\clearpage
\onecolumn
\begin{center}
    {\Large \textbf{\thetitle}}\\[1.0em]
    {\Large Supplementary Material}\\[2.0em]
\end{center}

\noindent We provide the following supplementary sections:
\begin{itemize}[leftmargin=1.8em, label=\small$\bullet$, itemsep=0.25em, topsep=0.4em, before=\small]
    \item \textbf{Section \ref{app:A}:} Proofs and theoretical derivations.
    \item \textbf{Section \ref{app:implementation_details}:} Implementation details, algorithm pseudocode, and hyperparameter settings.
    \item \textbf{Section \ref{sec:further_analysis}:} Further empirical analyses and theoretical discussions.
    \item \textbf{Section \ref{sec:few_step_quality}:} Comparison of few-step generation performance.
    \item \textbf{Section \ref{app:E}:} Additional qualitative results and generated visual samples.
    \item \textbf{Section \ref{app:F}:} Statement on the use of large language models.
\end{itemize}

\appendix
\setcounter{page}{1}

\section{Proofs}
\label{app:A}

\subsection{Angular Entropy via von Mises-Fisher KDE}
\label{app:angular_entropy}

As demonstrated by \citet{wang2020understanding}, minimizing the noise-noise repulsion term $\mathcal{L}_{\text{entropy}}$ is mathematically equivalent to maximizing the empirical differential entropy of the angular distribution, $H(\hat{Z})$. 

For an empirical batch of normalized vectors $\{\hat{z}_1, \dots, \hat{z}_N\}$, the empirical plug-in entropy is estimated via the sample average of the negative log-likelihood:
\begin{equation}
\hat{H}(\hat{Z}) = - \frac{1}{N} \sum_{i=1}^{N} \log \hat{\mu}(\hat{z}_i)
\label{eq:empirical_angular}
\end{equation}
where $\hat{\mu}(\hat{z}_i)$ is the estimated angular density at point $\hat{z}_i$. To evaluate this density cleanly on the unit hypersphere $\mathcal{S}^{d-1}$, we employ a \textit{von Mises-Fisher (vMF) kernel density estimator (KDE)} with a concentration parameter $\kappa = 1/\gamma$:
\begin{equation}
\hat{\mu}(\hat{z}_i) = \frac{1}{C_{\text{vMF}}(\gamma, d) (N-1)} \sum_{j \neq i} \exp\left( \frac{\langle \hat{z}_i, \hat{z}_j \rangle}{\gamma} \right)
\label{eq:vmf_kde}
\end{equation}
where $C_{\text{vMF}}(\gamma, d)$ is the partition function (inverse normalization constant) for the vMF distribution in $d$ dimensions. Substituting Equation \ref{eq:vmf_kde} into Equation \ref{eq:empirical_angular} yields:
\begin{align}
\hat{H}(\hat{Z}) &= - \frac{1}{N} \sum_{i=1}^{N} \log \left( \frac{1}{C_{\text{vMF}}(\gamma, d) (N-1)} \sum_{j \neq i} \exp\left( \frac{\langle \hat{z}_i, \hat{z}_j \rangle}{\gamma} \right) \right) \nonumber \\
&= - \left( \frac{1}{N} \sum_{i=1}^{N} \log \sum_{j \neq i} \exp\left( \frac{\langle \hat{z}_i, \hat{z}_j \rangle}{\gamma} \right) \right) + \log(C_{\text{vMF}}(\gamma, d)) + \log(N-1)
\label{eq:vmf_substitution}
\end{align}

We recognize the first expectation term on the right-hand side as the exact negative of our cross-sample particle repulsion loss, $\mathcal{L}_{\text{entropy}}$. Defining the constant $C = \log C_{\text{vMF}}(\gamma, d) + \log (N-1)$, we obtain the direct relation:
\begin{equation}
\mathcal{L}_{\text{entropy}} = - \hat{H}(\hat{Z}) + C
\label{eq:uniformity_entropy_relation}
\end{equation}
Thus, minimizing the cross-sample particle interaction $\mathcal{L}_{\text{entropy}}$ is mathematically equivalent to maximizing the empirical alternative of the angular differential entropy $\hat{H}(\hat{Z})$.

\subsection{Norm Growth during Discrete Noise Optimization}
\label{app:norm_growth}

In this section, we analyze the idealized baseline case of discrete optimization under vanilla gradient descent (GD, $\eta > 0$) with a scale-agnostic loss $\mathcal{L}(\hat{z})$ to demonstrate why unnormalized noise particles inherently expand outward. This centrifugal growth directly motivates our norm regularizer $\mathcal{L}_{\text{norm}}$. The mathematical foundation for this phenomenon was established by \citet{salimans2016weight, wang2017normface}; here, we adapt to our specific noise optimization setting.

Let $z \in \mathbb{R}^d$ be an unnormalized noise particle, and let $\hat{z}$ be its $L_2$-normalized counterpart on the unit hypersphere $\mathcal{S}^{d-1}$. Suppose that we are minimizing a  \textit{scale-agnostic} loss $\mathcal{L}(\hat{z})$. The gradient $g_t = \nabla_{z_t} \mathcal{L}$ is computed via the multi-variable chain rule:
\begin{equation}
    g_t = \left( \frac{\partial \hat{z}_t}{\partial z_t} \right)^T \nabla_{\hat{z}_t} \mathcal{L}
\end{equation}
Evaluating this derivative:
\begin{align}
    \frac{\partial \hat{z}}{\partial z} &= \frac{\partial}{\partial z} \left( z (z^T z)^{-\frac{1}{2}} \right) \nonumber \\
    &= I (z^T z)^{-\frac{1}{2}} - z (z^T z)^{-\frac{3}{2}} z^T \nonumber \\
    &= \frac{1}{\|z\|_2} \left( I - \frac{z z^T}{\|z\|_2^2} \right) \nonumber \\
    &= \frac{1}{\|z\|_2} \left( I - \hat{z} \hat{z}^T \right)
\end{align}
Substituting back into the gradient formula:
\begin{equation}
    g_t = \frac{1}{\|z_t\|_2} \left( I - \hat{z}_t \hat{z}_t^T \right) \nabla_{\hat{z}_t} \mathcal{L}
\end{equation}
This gradient $g_t$ is strictly orthogonal to the current unnormalized noise vector $z_t$. We verify this identity by taking their inner product:
\begin{align}
    z_t^T g_t &= z_t^T \left[ \frac{1}{\|z_t\|_2} \left( I - \hat{z}_t \hat{z}_t^T \right) \nabla_{\hat{z}_t} \mathcal{L} \right] \nonumber \\
    &= \frac{1}{\|z_t\|_2} \left( z_t^T - z_t^T \frac{z_t z_t^T}{\|z_t\|_2^2} \right) \nabla_{\hat{z}_t} \mathcal{L} \nonumber \\
    &= \frac{1}{\|z_t\|_2} \left( z_t^T - z_t^T \right) \nabla_{\hat{z}_t} \mathcal{L} = 0
\end{align}
Because $z_t^T g_t = 0$, every update step operates purely tangent to the hypersphere, seeking only to rotate the noise particles. Now, consider a discrete optimization update step with an optimization learning rate $\eta > 0$:
\begin{equation}
    z_{t+1} = z_t - \eta g_t
\end{equation}
We compute the squared $L_2$ norm of the updated particle $z_{t+1}$:
\begin{align}
    \|z_{t+1}\|_2^2 &= (z_t - \eta g_t)^T (z_t - \eta g_t) \nonumber \\
    &= \|z_t\|_2^2 - 2\eta z_t^T g_t + \eta^2 \|g_t\|_2^2
\end{align}
Exploiting ($z_t^T g_t = 0$), the cross-term vanishes, leaving:
\begin{equation}
    \|z_{t+1}\|_2^2 = \|z_t\|_2^2 + \eta^2 \|g_t\|_2^2 \geq \|z_t\|_2^2
\end{equation}

\subsection{Proof of Theorem \ref{thm:asymptotic_decomp}}
\label{sec:proof_theorem_1}

\begin{proof}
Let our combined objective be defined as $\mathcal{L}_{\text{total}} \triangleq \mathcal{L}_{\text{align}, \tau} + \beta \mathcal{L}_{\text{entropy}, \gamma} + \lambda \mathcal{L}_{\text{norm}}$. We extend the proof of Theorem 1 from \cite{wang2020understanding}.

By the Law of Large Numbers and the Continuous Mapping Theorem, as $N \to \infty$:
\begin{align}
    \lim_{N \to \infty} \left( \mathcal{L}_{\text{align}, \tau} - \log N \right) &= \lim_{N \to \infty} \left[ -\frac{1}{N} \sum_{i=1}^N \langle \hat{z}_i, \hat{x}_i \rangle / \tau + \frac{1}{N} \sum_{i=1}^N \log  \sum_{j=1}^N \exp(\langle \hat{z}_i, \hat{x}_j \rangle / \tau) - \log N \right] \nonumber\\ 
    &= \lim_{N \to \infty} \left[ -\frac{1}{N} \sum_{i=1}^N \langle \hat{z}_i, \hat{x}_i \rangle / \tau + \frac{1}{N} \sum_{i=1}^N \log \frac{1}{N} \sum_{j=1}^N \exp(\langle \hat{z}_i, \hat{x}_j \rangle / \tau) \right] \nonumber \\ 
    &= - \frac{1}{\tau} \mathbb{E}_{(z,x)\sim\pi} [\langle \hat{z}, \hat{x} \rangle] + \mathbb{E}_{z\sim\tilde{\mu}_0} \left[ \log \mathbb{E}_{x\sim\mu_1} \left[ \exp(\langle \hat{z}, \hat{x} \rangle / \tau) \right] \right] \nonumber
\end{align}
defining the second term as the cross-modal repulsion $\Phi_\tau(\tilde{\mu}_0, \mu_1)$.

We apply the identical logic to our empirical entropy loss $\mathcal{L}_{\text{entropy}, \gamma}$, which computes pairwise similarities strictly among $N$ noise particles drawn from $\tilde{\mu}_0$:
\begin{align}
    \lim_{N \to \infty} \left( \mathcal{L}_{\text{entropy}, \gamma} - \log N \right) 
    &= \lim_{N \to \infty} \left[ \frac{1}{N} \sum_{i=1}^N \log \left( \sum_{j \neq i} \exp(\langle \hat{z}_i, \hat{z}_j \rangle / \gamma) \right) - \log N \right] \nonumber\\
    &= \lim_{N \to \infty} \frac{1}{N} \sum_{i=1}^N \log \left( \frac{1}{N} \sum_{j \neq i} \exp(\langle \hat{z}_i, \hat{z}_j \rangle / \gamma) \right) \nonumber\\
    &= \lim_{N \to \infty} \frac{1}{N} \sum_{i=1}^N \log \left( \frac{N-1}{N} \cdot \frac{1}{N-1} \sum_{j \neq i} \exp(\langle \hat{z}_i, \hat{z}_j \rangle / \gamma) \right) \nonumber\\
    &= \underbrace{\lim_{N \to \infty} \log\left(\frac{N-1}{N}\right)}_{= 0} + \lim_{N \to \infty} \frac{1}{N} \sum_{i=1}^N \log \left( \frac{1}{N-1} \sum_{j \neq i} \exp(\langle \hat{z}_i, \hat{z}_j \rangle / \gamma) \right) \nonumber\\
    &= \mathbb{E}_{z\sim\tilde{\mu}_0} \left[ \log \mathbb{E}_{z'\sim\tilde{\mu}_0} \left[ \exp(\langle \hat{z}, \hat{z}' \rangle / \gamma) \right] \right] \triangleq \Phi_\gamma(\tilde{\mu}_0) \nonumber
\end{align}

Summing these respective limits and scaling the unimodal repulsion explicitly by $\beta$:
\begin{align}
    \lim_{N \to \infty} \left( \mathcal{L}_{\text{total}} - (1 + \beta)\log N \right) &= - \frac{1}{\tau} \mathbb{E}_{(z,x)\sim\pi} [\langle \hat{z}, \hat{x} \rangle] + \Phi_\tau(\tilde{\mu}_0, \mu_1) + \beta \Phi_\gamma(\tilde{\mu}_0) + \lambda\mathbb{E}_{z\sim\tilde{\mu}_0}\left[\|z\|^2\right] \nonumber
\end{align}
    
where we used the fact that $\lim_{N \to \infty} \mathcal{L}_{\text{norm}} = \lim_{N \to \infty} \frac{1}{N} \sum_{i=1}^{N} \|z_i\|_2^2 = \mathbb{E}_{z\sim\tilde{\mu}_0}\left[\|z\|^2\right]$. This proves Theorem \ref{thm:asymptotic_decomp}.
\end{proof}

\subsection{Proof of Corollary \ref{cor:induced_gaussianity}}
\label{sec:proof_corollary_1}
\begin{proof}
We adapt the proof from (\cite{betser2026infonce}, Appendix C). Let $z \in \mathbb{R}^d$ denote the unnormalized noise representation and write its polar decomposition as $z = r u$ with $r = \|z\|_2 > 0$ and $u := z/\|z\|_2 \in \mathcal{S}^{d-1}$. For any fixed $k \ge 1$, let $P_k$ be the corresponding orthogonal projector, setting $z_k := P_k z$ and $u_k := P_k u$. 

As $\beta \to \infty$, the asymptotic objective reduces entirely to minimizing the uniformity potential $\Phi_\gamma(\tilde{\mu}_0)$ (if $\lambda$ = 0). By \citet{wang2020understanding} (Appendix A), $\Phi_\gamma$ is uniquely minimized at the uniform law $\sigma$ on $\mathcal{S}^{d-1}$, hence the angular component satisfies $u \sim \sigma$. By the Maxwell-Poincaré spherical CLT (Lemma \ref{lem:maxwell_poincare}):
\begin{equation}
    \sqrt{d} u_k \Rightarrow \mathcal{N}(0, \mathbf{I}_k) \quad (d \to \infty).
\end{equation}
Because the scale-invariant objective evaluates strictly $L_2$ 
normalized vectors, its continuous gradient flow acts purely tangentially to the hypersphere (see Appendix \ref{app:norm_growth}). Under this idealized continuous optimization (and, i.e., $\lambda = 0$), the radial distribution of the standard Gaussian initialization $\mu_0 = \mathcal{N}(0, \mathbf{I})$ is preserved. By high-dimensional gaussian thin-shell concentration, the normalized radius converges in probability:
\begin{equation}
    \frac{r}{\sqrt{d}} \xrightarrow{P} 1 \quad (d \to \infty).
\end{equation}Since $z_k = r u_k = \left(\frac{r}{\sqrt{d}}\right) (\sqrt{d} u_k)$, combining the limits via Slutsky's theorem \cite{van2000asymptotic} results in:
\begin{equation}
    z_k \Rightarrow 1 \cdot \mathcal{N}(0, \mathbf{I}_k) = \mathcal{N}(0, \mathbf{I}_k) \quad (d \to \infty).
\end{equation}
which concludes the proof of Corollary \ref{cor:induced_gaussianity}. 
\end{proof}
\section{Implementation Details}
\label{app:implementation_details}

This section details the model architectures, training hyperparameters, and evaluation metrics used in our experiments.

\subsection{Evaluation Metrics} \label{appendix:metrics}

\paragraph{Fr\'{e}chet Inception Distance (FID). }
FID evaluates sample quality by comparing the empirical statistics of real and generated images in the feature space of a pre-trained Inception-v3 network \cite{heusel2017gans}. To match the baselines from other papers, we use the standard \texttt{clean-fid} implementation in legacy TensorFlow mode. Reference statistics are calculated on the training images and compared against 50,000 samples generated by the model.

\paragraph{Mean Path Curvature.} 
We apply the trajectory curvature metric introduced by \citet{lee2023minimizing}. Let $v_t(x_t)$ denote the predicted network velocity along the path from initial noise $x_0$ to the generated data $x_1$. The mean path curvature $\kappa$ is defined as:
\begin{equation}
    \kappa = \mathbb{E}_{t, x_0} \left[ \|(x_1 - x_0) - v_t(x_t)\|_2^2 \right]
\end{equation}
where $t \sim U(0, 1)$ and $x_0 \sim \mu_0 = \mathcal{N}(0, \mathbf{I})$. Curvature is not comparable across solvers and timestep schedules. However, within a fixed setup, lower values indicate straighter generative paths.

\subsection{Pseudocode}
The detailed algorithm for our noise optimization method is provided in Algorithm \ref{alg:npo_training}.

\begin{algorithm}[htbp]
\caption{Training Flow Matching with Contrastive Noise Alignment (CNA)}
\label{alg:npo_training}
\textbf{Inputs:} Flow model $v_\theta$, batch size $B$, initial coupling $\pi_{\text{init}}\in \Pi(\mu_0, \mu_1)$, CNA learning rate $\eta$, CNA optimization steps $T$, temperatures $\tau, \gamma$, loss weights $\beta, \lambda$ \\
\textbf{Outputs:} Trained network parameters $\theta$
\begin{algorithmic}[1]
\WHILE{network $\theta$ has not converged}
    \STATE Sample pairs $\{(z_i, x_i)\}_{i=1}^B \sim \pi_{\text{init}}$

    \STATE \textcolor{gray}{\textit{\# Contrastive Noise Alignment (CNA)}}
    \STATE $\hat{X} \leftarrow \{ x_i / \|x_i\|_2 \}_{i=1}^B$ 
    \FOR{$t = 1$ \TO $T$}
        \STATE $\hat{Z} \leftarrow \{ z_i / \|z_i\|_2 \}_{i=1}^B$ 
        \STATE $\mathcal{L}_{\text{align}} := \text{InfoNCE}(\hat{Z}, \hat{X}, \tau)$
        \STATE $\mathcal{L}_{\text{entropy}} := \frac{1}{B} \sum_{i=1}^{B} \log \left( \sum_{j \neq i} \exp \left( \frac{\hat{z}_i \cdot \hat{z}_j}{\gamma} \right) \right)$
        \STATE $\mathcal{L}_{\text{norm}} := \frac{1}{B} \sum_{i=1}^{B} \|z_i\|_2^2$
        \STATE $\mathcal{L}_{\text{CNA}} := \mathcal{L}_{\text{align}} + \beta \mathcal{L}_{\text{entropy}} + \lambda \mathcal{L}_{\text{norm}}$
        \STATE $Z \leftarrow Z - \eta \cdot \nabla_{Z} \mathcal{L}_{\text{CNA}}$ \hfill \COMMENT{\textit{$\triangleright$ Update noise states}}
    \ENDFOR
    \STATE Let $Z^* = \{z_i^*\}_{i=1}^B \leftarrow Z$ be the optimized coupled noise batch

    \STATE $\theta \leftarrow \text{FMStep}(\theta, (z_i^*)_{i=1}^B, (x_i)_{i=1}^B)$ \hfill \COMMENT{\textit{$\triangleright$ Sample $t$, compute vector field loss, update $\theta$}}
\ENDWHILE
\STATE \textbf{return} $\theta$
\end{algorithmic}
\end{algorithm}

\subsection{Flow Model Training}
\label{sec:flow_model_training}

For each dataset, the specific U-Net architectural choices and hyperparameters are detailed in Table \ref{tab:adm_params}.

\begin{table}[h]
\centering
\caption{ADM network architecture and hyperparameters used for flow model training.}
\label{tab:adm_params}
\renewcommand{\arraystretch}{1.1} 
\begin{tabular}{@{}lcc@{}}
\toprule
 & \textbf{CIFAR-10} & \textbf{ImageNet (32x32) } \\
\midrule
Channels & 128 & 256 \\
Depth & 2 & 3 \\
Channels multipliers & $1, 2, 2, 2$ & $1, 2, 2, 2$ \\
Heads & 4 & 4 \\
Heads channels & 64 & 64 \\
Attention resolution & $16$ & $4$ \\
Dropout & 0.1 & 0.0 \\
Batch size / GPU & 128 & 128 \\
Effective batch size & 128 & 1024 \\
GPUs & 1 & 8 \\
Iterations & 390k & 400k \\
Learning rate & 0.0002 & 0.0001 \\
Learning rate scheduler & Constant & Polynomial Decay \\
Warmup steps & 5k & 20k \\
Model parameters & 36M & 189M \\
\bottomrule
\end{tabular}
\end{table}

\textbf{CIFAR-10.} To ensure a fair comparison with prior work, we adopt the exact ADM architecture and hyperparameters specified by \citet{tong_improving_2024}. For this dataset, all models are trained using a single NVIDIA RTX 4090 GPU.

\textbf{ImageNet.} We follow the U-Net configuration and hyperparameters described by \citet{pooladian_multisample_2023}. These models are trained on a single node with $8 \times$ NVIDIA RTX 4090 GPUs.

For all models, we also apply an Exponential Moving Average (EMA) with a decay factor of $0.9999$ to the weights.

\subsection{Hyperparameter Settings}
\label{sec:hyperparameters}

\begin{table}[htbp]
\centering
\caption{Hyperparameter configurations for CNA and CNA + OT. Shared optimization settings are applied across both strategies and datasets.}
\label{tab:latent_optimization_hyperparams}

\begin{subtable}[t]{0.34\textwidth}
\centering
\caption{\textit{Shared} Parameters}
\label{tab:shared_hyperparams}
\small
\begin{tabular}{@{}lc@{}}
\toprule
Parameter & \quad \\
\midrule
\rowcolor{gray!10} 
Optimization steps $N_{\text{opt}}$ & $4$ \\
Learning rate $\eta$ & $0.02$ \\
$\tau$ & $0.01$ \\
$\gamma$ & $0.01$ \\
\bottomrule
\end{tabular}
\end{subtable}%
\hfill 
\begin{subtable}[t]{0.64\textwidth}
\centering
\caption{\textit{Tuned} Parameters}
\label{tab:dataset_hyperparams}
\small
\begin{tabular}{@{}lcccc@{}}
\toprule
\multirow{2}{*}{Dataset} & \multicolumn{2}{c}{CNA} & \multicolumn{2}{c}{CNA + OT} \\
\cmidrule(lr){2-3} \cmidrule(l){4-5}
 & $\beta$ & $\lambda$ & $\beta$ & $\lambda$ \\
\midrule
CIFAR-10   & 1-5 & 0.001 & 1-10 & 0.001 \\
ImageNet32 & 1-5 & 0.001 & 1-50 & \{0.001, 0.005\} \\
\bottomrule
\end{tabular}
\end{subtable}

\end{table}

For our main experiments, we evaluate our methods on CIFAR-10 and ImageNet ($32 \times 32$). The complete configuration of optimization parameters, temperatures, and loss weights is detailed in Table \ref{tab:latent_optimization_hyperparams}. 

For both datasets, the number of optimization steps is set to $N_{\text{opt}} = 4$ using \textit{Adam} optimizer with default settings and a learning rate of $\eta = 0.02$. The temperature parameters controlling feature scaling and field repulsion are fixed at $\tau = 0.01$ and $\gamma = 0.01$, respectively. Furthermore, the norm regularization weight $\lambda$ is tuned such that the average $L_2$ norm of the optimized batch is preserved. We evaluate two initialization strategies: standard independent coupling (CNA) and pre-structured initialization via minibatch OT (CNA + OT) \citep{pooladian_multisample_2023, tong_improving_2024}.

\section{Further Analyses And Discussion}
\label{sec:further_analysis}

\subsection{Connection to Kullback-Leibler Minimization}
\label{app:kl_connection}
To understand what our regularizer actually targets, we can connect it to the Kullback-Leibler (KL) divergence between our optimized prior $q(z)$ and the target standard Gaussian $\mathcal{N}(0, \mathbf{I})$:
\begin{equation}
    D_{\mathrm{KL}}(q \parallel \mathcal{N}(0, \mathbf{I})) = -h(q) + \frac{1}{2} \mathbb{E}_{z \sim q}\left[ \|z\|_2^2 \right] + \text{const.}
\end{equation}
where $h(q)$ is the joint differential entropy. Decomposing $z$ into polar coordinates $z = r u$ (where $r = \|z\|_2$ and $u = z/\|z\|_2 \in \mathcal{S}^{d-1}$), this joint entropy expands as:
\begin{equation}
    h(q) = h(u) + h(r \mid u) + (d-1)\mathbb{E}[\log r]
\end{equation}
where $h(u)$ is the angular entropy on the hypersphere and $h(r \mid u)$ is the conditional radial entropy.

Comparing this decomposition to our regularizer $L_{\text{reg}} = \beta \mathcal{L}_{\text{entropy}} + \lambda \mathcal{L}_{\text{norm}}$, we see that while we optimize the angular entropy $h(u)$ (via $\mathcal{L}_{\text{entropy}}$) and penalize the second moment (via $\mathcal{L}_{\text{norm}}$), our objective \emph{lacks} the conditional radial entropy $h(r \mid u)$. 

Thus, our method does not minimize the full joint KL divergence; instead, it is best understood as a proxy that approximates the angular part of the KL penalty, relying on $\mathcal{L}_{\text{norm}}$ purely as an empirical scale stabilizer (see Appendix \ref{app:norm_growth}).

\subsection{Empirical Verification of Gaussianity}
To verify that CNA preserves the structure of the Gaussian prior during training, we investigate the optimized noise across two key geometric properties: radial concentration and spatial isotropy.
\paragraph{Radial concentration.} We analyze the radial concentration of noise particles to verify whether the optimization preserves the $L_2$ scale of the Gaussian prior. In high dimensions, a standard Gaussian vector $z \sim \mathcal{N}(0, \mathbf{I}_d)$ concentrates on a thin spherical shell following a Chi distribution $\chi_d$. Figure \ref{fig:radial_concentration} compares the norm density of the initial noise against optimized noise with and without the radial norm penalty $\mathcal{L}_{\text{norm}}$. Because cosine alignment ($\mathcal{L}_{\text{align}}$) and angular entropy ($\mathcal{L}_{\text{entropy}}$) operate on normalized vectors, optimizing without $\mathcal{L}_{\text{norm}}$ causes norm expansion, shifting the density rightward. In contrast, adding our radial norm penalty ($\lambda = 0.001$, for $\beta = 5$) counteracts this drift, keeping the spatial distribution virtually unchanged.
\begin{figure}[ht]
    \centering
    \small
    
    \pgfplotstableread{
    Norm InitKDE OptKDE ChiPDF
    52.74078 0.00001 0.00001 0.00039
    52.77669 0.00002 0.00002 0.00047
    52.81260 0.00003 0.00003 0.00057
    52.84850 0.00004 0.00004 0.00069
    52.88441 0.00007 0.00007 0.00084
    52.92032 0.00010 0.00010 0.00101
    52.95623 0.00015 0.00015 0.00121
    52.99213 0.00022 0.00023 0.00145
    53.02804 0.00033 0.00033 0.00173
    53.06395 0.00047 0.00048 0.00205
    53.09985 0.00066 0.00068 0.00244
    53.13576 0.00091 0.00094 0.00288
    53.17167 0.00124 0.00128 0.00341
    53.20757 0.00167 0.00173 0.00401
    53.24348 0.00222 0.00230 0.00471
    53.27939 0.00291 0.00301 0.00551
    53.31530 0.00376 0.00389 0.00644
    53.35120 0.00480 0.00496 0.00750
    53.38711 0.00606 0.00625 0.00871
    53.42302 0.00757 0.00779 0.01009
    53.45892 0.00937 0.00963 0.01166
    53.49483 0.01151 0.01180 0.01344
    53.53074 0.01403 0.01434 0.01545
    53.56664 0.01699 0.01731 0.01771
    53.60255 0.02044 0.02077 0.02024
    53.63846 0.02446 0.02479 0.02308
    53.67437 0.02910 0.02944 0.02624
    53.71027 0.03444 0.03478 0.02976
    53.74618 0.04052 0.04088 0.03366
    53.78209 0.04739 0.04779 0.03797
    53.81799 0.05506 0.05555 0.04272
    53.85390 0.06353 0.06416 0.04794
    53.88980 0.07277 0.07362 0.05365
    53.92571 0.08272 0.08387 0.05988
    53.96162 0.09328 0.09484 0.06666
    53.99753 0.10434 0.10643 0.07400
    54.03343 0.11578 0.11853 0.08194
    54.06934 0.12748 0.13101 0.09050
    54.10525 0.13934 0.14376 0.09968
    54.14116 0.15126 0.15669 0.10950
    54.17706 0.16320 0.16970 0.11998
    54.21297 0.17515 0.18277 0.13111
    54.24887 0.18716 0.19589 0.14289
    54.28478 0.19927 0.20908 0.15533
    54.32069 0.21161 0.22240 0.16840
    54.35660 0.22426 0.23592 0.18210
    54.39250 0.23735 0.24971 0.19639
    54.42841 0.25095 0.26385 0.21124
    54.46432 0.26514 0.27840 0.22663
    54.50023 0.27991 0.29337 0.24249
    54.53613 0.29524 0.30876 0.25879
    54.57204 0.31103 0.32452 0.27546
    54.60794 0.32716 0.34056 0.29244
    54.64385 0.34346 0.35677 0.30966
    54.67976 0.35971 0.37296 0.32703
    54.71567 0.37571 0.38896 0.34447
    54.75157 0.39121 0.40453 0.36190
    54.78748 0.40600 0.41944 0.37922
    54.82339 0.41987 0.43346 0.39633
    54.85929 0.43263 0.44636 0.41314
    54.89520 0.44413 0.45794 0.42953
    54.93111 0.45427 0.46804 0.44542
    54.96701 0.46299 0.47654 0.46070
    55.00292 0.47028 0.48341 0.47526
    55.03883 0.47619 0.48867 0.48901
    55.07474 0.48082 0.49243 0.50185
    55.11064 0.48433 0.49485 0.51370
    55.14655 0.48689 0.49613 0.52446
    55.18246 0.48870 0.49652 0.53406
    55.21836 0.48997 0.49626 0.54244
    55.25427 0.49087 0.49556 0.54952
    55.29018 0.49153 0.49459 0.55525
    55.32608 0.49201 0.49346 0.55960
    55.36199 0.49233 0.49220 0.56252
    55.39790 0.49239 0.49076 0.56401
    55.43380 0.49206 0.48900 0.56404
    55.46971 0.49113 0.48676 0.56262
    55.50562 0.48937 0.48383 0.55975
    55.54153 0.48654 0.47997 0.55547
    55.57743 0.48243 0.47501 0.54981
    55.61334 0.47688 0.46878 0.54281
    55.64925 0.46977 0.46119 0.53452
    55.68515 0.46110 0.45225 0.52501
    55.72106 0.45094 0.44201 0.51434
    55.75697 0.43944 0.43065 0.50261
    55.79287 0.42684 0.41838 0.48988
    55.82878 0.41342 0.40547 0.47625
    55.86469 0.39951 0.39224 0.46182
    55.90060 0.38542 0.37899 0.44668
    55.93650 0.37148 0.36600 0.43094
    55.97241 0.35797 0.35353 0.41469
    56.00832 0.34511 0.34175 0.39803
    56.04422 0.33305 0.33078 0.38107
    56.08013 0.32188 0.32063 0.36390
    56.11604 0.31163 0.31128 0.34663
    56.15194 0.30223 0.30260 0.32933
    56.18785 0.29358 0.29444 0.31210
    56.22376 0.28552 0.28659 0.29502
    56.25967 0.27789 0.27884 0.27817
    56.29557 0.27049 0.27098 0.26161
    56.33148 0.26312 0.26281 0.24542
    56.36739 0.25560 0.25417 0.22965
    56.40329 0.24777 0.24496 0.21434
    56.43920 0.23950 0.23509 0.19955
    56.47511 0.23069 0.22456 0.18531
    56.51101 0.22127 0.21337 0.17165
    56.54692 0.21122 0.20161 0.15860
    56.58283 0.20053 0.18936 0.14616
    56.61874 0.18926 0.17676 0.13437
    56.65464 0.17749 0.16394 0.12321
    56.69055 0.16532 0.15105 0.11270
    56.72646 0.15289 0.13824 0.10282
    56.76236 0.14035 0.12566 0.09357
    56.79827 0.12787 0.11343 0.08494
    56.83418 0.11559 0.10168 0.07692
    56.87008 0.10369 0.09050 0.06947
    56.90599 0.09231 0.07996 0.06259
    56.94190 0.08155 0.07013 0.05625
    56.97781 0.07152 0.06104 0.05042
    57.01371 0.06228 0.05272 0.04509
    57.04962 0.05386 0.04516 0.04022
    57.08553 0.04628 0.03836 0.03578
    57.12143 0.03950 0.03230 0.03176
    57.15734 0.03351 0.02695 0.02811
    57.19324 0.02825 0.02227 0.02483
    57.22915 0.02366 0.01821 0.02187
    57.26506 0.01969 0.01474 0.01921
    57.30097 0.01627 0.01180 0.01684
    57.33687 0.01334 0.00933 0.01472
    57.37278 0.01086 0.00729 0.01284
    57.40869 0.00875 0.00563 0.01117
    57.44460 0.00699 0.00429 0.00969
    57.48050 0.00553 0.00322 0.00839
    57.51641 0.00432 0.00239 0.00725
    57.55231 0.00334 0.00174 0.00624
    57.58822 0.00254 0.00125 0.00536
    57.62413 0.00192 0.00089 0.00460
    57.66003 0.00142 0.00062 0.00393
    57.69594 0.00104 0.00043 0.00335
    57.73185 0.00075 0.00029 0.00285
    57.76776 0.00053 0.00019 0.00242
    57.80367 0.00037 0.00013 0.00205
    57.83957 0.00026 0.00008 0.00173
    57.87548 0.00017 0.00005 0.00146
    57.91138 0.00012 0.00003 0.00122
    57.94729 0.00008 0.00002 0.00103
    57.98320 0.00005 0.00001 0.00086
    58.01910 0.00003 0.00001 0.00072
    58.05501 0.00002 0.00000 0.00060
    58.09092 0.00001 0.00000 0.00049
    }{\tableWithNorm}

    \pgfplotstableread{
    Norm InitKDE OptKDE ChiPDF
    52.75058 0.00001 0.00000 0.00041
    52.78669 0.00002 0.00001 0.00050
    52.82278 0.00003 0.00001 0.00061
    52.85889 0.00005 0.00001 0.00073
    52.89499 0.00007 0.00002 0.00088
    52.93109 0.00011 0.00004 0.00106
    52.96719 0.00017 0.00006 0.00128
    53.00329 0.00025 0.00009 0.00153
    53.03939 0.00037 0.00014 0.00182
    53.07549 0.00052 0.00021 0.00217
    53.11159 0.00073 0.00031 0.00258
    53.14769 0.00101 0.00044 0.00305
    53.18379 0.00138 0.00063 0.00360
    53.21989 0.00185 0.00087 0.00424
    53.25599 0.00244 0.00119 0.00498
    53.29210 0.00319 0.00161 0.00583
    53.32819 0.00411 0.00213 0.00680
    53.36430 0.00523 0.00279 0.00792
    53.40039 0.00658 0.00360 0.00920
    53.43650 0.00821 0.00458 0.01066
    53.47260 0.01014 0.00577 0.01232
    53.50870 0.01244 0.00719 0.01419
    53.54480 0.01513 0.00887 0.01630
    53.58090 0.01829 0.01084 0.01868
    53.61700 0.02199 0.01316 0.02135
    53.65310 0.02627 0.01588 0.02433
    53.68920 0.03122 0.01904 0.02765
    53.72530 0.03689 0.02274 0.03135
    53.76140 0.04333 0.02702 0.03544
    53.79750 0.05058 0.03199 0.03996
    53.83360 0.05865 0.03771 0.04493
    53.86971 0.06751 0.04424 0.05039
    53.90580 0.07712 0.05165 0.05636
    53.94191 0.08741 0.05996 0.06287
    53.97801 0.09827 0.06918 0.06994
    54.01411 0.10958 0.07927 0.07760
    54.05021 0.12122 0.09018 0.08586
    54.08631 0.13307 0.10180 0.09476
    54.12241 0.14503 0.11401 0.10429
    54.15851 0.15703 0.12669 0.11448
    54.19461 0.16904 0.13968 0.12533
    54.23071 0.18108 0.15287 0.13685
    54.26681 0.19319 0.16612 0.14902
    54.30291 0.20547 0.17938 0.16185
    54.33901 0.21802 0.19259 0.17531
    54.37511 0.23095 0.20574 0.18939
    54.41122 0.24437 0.21887 0.20406
    54.44732 0.25835 0.23202 0.21928
    54.48342 0.27292 0.24528 0.23501
    54.51952 0.28808 0.25873 0.25120
    54.55562 0.30376 0.27243 0.26780
    54.59172 0.31984 0.28644 0.28474
    54.62782 0.33618 0.30079 0.30195
    54.66392 0.35256 0.31547 0.31935
    54.70002 0.36878 0.33043 0.33687
    54.73612 0.38461 0.34559 0.35441
    54.77222 0.39982 0.36083 0.37188
    54.80832 0.41417 0.37600 0.38918
    54.84442 0.42749 0.39093 0.40622
    54.88052 0.43959 0.40542 0.42289
    54.91663 0.45035 0.41927 0.43908
    54.95272 0.45969 0.43226 0.45470
    54.98883 0.46759 0.44421 0.46963
    55.02493 0.47406 0.45496 0.48379
    55.06103 0.47920 0.46437 0.49706
    55.09713 0.48313 0.47237 0.50936
    55.13323 0.48604 0.47893 0.52060
    55.16933 0.48811 0.48410 0.53069
    55.20543 0.48956 0.48796 0.53957
    55.24153 0.49087 0.49067 0.54716
    55.27763 0.49132 0.49241 0.55340
    55.31373 0.49186 0.49337 0.55826
    55.34983 0.49224 0.49374 0.56169
    55.38593 0.49241 0.49369 0.56367
    55.42204 0.49222 0.49332 0.56419
    55.45813 0.49151 0.49268 0.56323
    55.49424 0.49003 0.49175 0.56082
    55.53033 0.48755 0.49045 0.55696
    55.56644 0.48384 0.48862 0.55169
    55.60254 0.47871 0.48607 0.54505
    55.63864 0.47203 0.48260 0.53710
    55.67474 0.46377 0.47801 0.52789
    55.71084 0.45398 0.47214 0.51749
    55.74694 0.44277 0.46489 0.50599
    55.78304 0.43039 0.45623 0.49346
    55.81914 0.41709 0.44621 0.47999
    55.85524 0.40320 0.43497 0.46569
    55.89134 0.38905 0.42272 0.45065
    55.92744 0.37497 0.40972 0.43496
    55.96354 0.36125 0.39627 0.41874
    55.99965 0.34814 0.38269 0.40208
    56.03574 0.33582 0.36929 0.38510
    56.07185 0.32438 0.35633 0.36788
    56.10795 0.31386 0.34401 0.35052
    56.14405 0.30423 0.33248 0.33313
    56.18015 0.29538 0.32179 0.31579
    56.21625 0.28717 0.31195 0.29858
    56.25235 0.27942 0.30287 0.28158
    56.28845 0.27195 0.29442 0.26487
    56.32455 0.26454 0.28643 0.24851
    56.36065 0.25703 0.27871 0.23257
    56.39675 0.24923 0.27105 0.21709
    56.43285 0.24100 0.26326 0.20212
    56.46895 0.23224 0.25518 0.18771
    56.50505 0.22288 0.24666 0.17388
    56.54116 0.21287 0.23759 0.16065
    56.57726 0.20223 0.22791 0.14805
    56.61336 0.19098 0.21758 0.13609
    56.64946 0.17922 0.20663 0.12478
    56.68556 0.16703 0.19509 0.11412
    56.72166 0.15456 0.18304 0.10410
    56.75776 0.14196 0.17061 0.09472
    56.79386 0.12939 0.15791 0.08597
    56.82996 0.11702 0.14510 0.07783
    56.86606 0.10500 0.13233 0.07028
    56.90216 0.09349 0.11976 0.06330
    56.93826 0.08261 0.10754 0.05687
    56.97437 0.07245 0.09581 0.05096
    57.01046 0.06308 0.08468 0.04555
    57.04657 0.05455 0.07426 0.04061
    57.08266 0.04685 0.06461 0.03612
    57.11877 0.03998 0.05577 0.03204
    57.15487 0.03390 0.04776 0.02835
    57.19097 0.02856 0.04058 0.02502
    57.22707 0.02391 0.03421 0.02203
    57.26317 0.01989 0.02860 0.01935
    57.29927 0.01642 0.02371 0.01695
    57.33537 0.01346 0.01950 0.01481
    57.37147 0.01094 0.01588 0.01291
    57.40869 0.00881 0.01282 0.01122
    57.44367 0.00703 0.01024 0.00973
    57.47977 0.00555 0.00810 0.00842
    57.51587 0.00433 0.00634 0.00726
    57.55198 0.00334 0.00490 0.00625
    57.58807 0.00255 0.00374 0.00537
    57.62418 0.00191 0.00282 0.00459
    57.66028 0.00142 0.00210 0.00392
    57.69638 0.00104 0.00154 0.00334
    57.73248 0.00075 0.00112 0.00284
    57.76858 0.00053 0.00080 0.00241
    57.80468 0.00037 0.00056 0.00204
    57.84078 0.00025 0.00039 0.00172
    57.87688 0.00017 0.00026 0.00145
    57.91298 0.00011 0.00018 0.00121
    57.94908 0.00007 0.00012 0.00102
    57.98518 0.00005 0.00008 0.00085
    58.02128 0.00003 0.00005 0.00071
    58.05738 0.00002 0.00003 0.00059
    58.09348 0.00001 0.00002 0.00049
    58.12959 0.00001 0.00001 0.00040
    }{\tableWithoutNorm}

    \begin{tikzpicture}
        \begin{axis}[
            width=0.65\linewidth,
            height=5cm,
            xlabel={Radial $L_2$ Norm},
            ylabel={Density},
            grid=major,
            grid style={dashed, gray!30},
            legend pos=north east,
            legend style={draw=none, fill=none, font=\small},
            axis lines=left,
            thick,
            enlarge x limits=false,
            ymin=0, ymax=0.72
        ]
        
        \addplot[
            forget plot,
            color=black, 
            dashed, 
            line width=1.2pt,
            mark=none
        ] table [x=Norm, y=ChiPDF] {\tableWithNorm};

        \addplot[
            color=blue!70!black, 
            line width=1.2pt,
            mark=none,
            fill=blue!10, fill opacity=0.4
        ] table [x=Norm, y=InitKDE] {\tableWithNorm};
        \addlegendentry{Initial Noise}
        
        \addplot[
            color=teal!80!black, 
            line width=1.2pt,
            mark=none,
            fill=teal!15, fill opacity=0.4
        ] table [x=Norm, y=OptKDE] {\tableWithNorm};
        \addlegendentry{Optimized (w/ $\mathcal{L}_{\text{norm}}$)}

        \addplot[
            color=red!80!black, 
            line width=1.2pt,
            mark=none,
            fill=red!15, fill opacity=0.4
        ] table [x=Norm, y=OptKDE] {\tableWithoutNorm};
        \addlegendentry{Optimized (w/o $\mathcal{L}_{\text{norm}}$)}
        
        \end{axis}
    \end{tikzpicture}
    \caption{Radial concentration density of noise particles ($d=3072$). When trained without the radial norm penalty ($\mathcal{L}_{\text{norm}}$), scale-agnostic contrastive alignment induces centrifugal expansion, shifting particle norms rightward. Including $\mathcal{L}_{\text{norm}}$ successfully prevents this.}
    \label{fig:radial_concentration}
\end{figure}

\paragraph{Preserving isotropic structure.} We run Principal Component Analysis (PCA) on optimized noise batches to check whether CNA maintains isotropic variance across dimensions. Figure \ref{fig:eigen_decay_beta} shows the eigenvalue decay across regularization strengths ($\beta$). Without structural penalties ($\beta = 0$), the noise over-adapts and leading dimensions dominate. Introducing entropy and norm penalties ($\beta = 1, 5$) pulls the spectrum back to the baseline. At $\beta = 5$, the decay matches standard random noise, confirming that repulsive forces prevent spatial collapse while keeping the Gaussian prior intact.

\begin{figure}[ht]
    \centering
    \small
    \pgfplotstableread{
    Component AlignedBeta0 AlignedBeta1 AlignedBeta5 Random
    1 3.764709e+00 3.239363e+00 2.467555e+00 2.309396e+00
    2 2.836128e+00 2.561717e+00 2.312198e+00 2.175973e+00
    3 2.613418e+00 2.414694e+00 2.123885e+00 2.117991e+00
    4 2.444073e+00 2.300291e+00 2.042991e+00 2.028759e+00
    5 2.271283e+00 2.180482e+00 1.957436e+00 1.928792e+00
    6 2.165148e+00 2.050123e+00 1.907780e+00 1.822239e+00
    7 2.108673e+00 1.980296e+00 1.884194e+00 1.808337e+00
    8 2.020113e+00 1.963181e+00 1.836680e+00 1.615476e+00
    9 1.950801e+00 1.863012e+00 1.757150e+00 1.596141e+00
    10 1.897334e+00 1.818162e+00 1.647620e+00 1.525682e+00
    11 1.866087e+00 1.757822e+00 1.615337e+00 1.513988e+00
    12 1.797639e+00 1.741931e+00 1.558016e+00 1.441855e+00
    13 1.741144e+00 1.658319e+00 1.528085e+00 1.424796e+00
    14 1.687026e+00 1.639158e+00 1.477062e+00 1.389154e+00
    15 1.559174e+00 1.502187e+00 1.437404e+00 1.353394e+00
    16 1.536640e+00 1.480643e+00 1.346126e+00 1.189998e+00
    17 1.474603e+00 1.428298e+00 1.286807e+00 1.175255e+00
    18 1.355619e+00 1.328471e+00 1.280617e+00 1.159913e+00
    19 1.304967e+00 1.291182e+00 1.174281e+00 1.129174e+00
    20 1.244721e+00 1.235914e+00 1.165079e+00 1.098236e+00
    21 1.166506e+00 1.139061e+00 1.082518e+00 1.039592e+00
    22 1.131366e+00 1.134502e+00 1.047557e+00 9.638383e-01
    23 1.083089e+00 1.059700e+00 9.976327e-01 9.378095e-01
    24 1.025379e+00 9.925826e-01 9.519389e-01 8.937466e-01
    25 9.907168e-01 9.599075e-01 9.053615e-01 8.522193e-01
    26 9.542714e-01 9.065035e-01 8.588039e-01 8.334159e-01
    27 8.661701e-01 8.546744e-01 8.286870e-01 7.755365e-01
    28 8.625995e-01 8.446990e-01 8.061573e-01 7.605220e-01
    29 8.263313e-01 8.098547e-01 7.583583e-01 7.352925e-01
    30 7.697717e-01 7.502446e-01 6.974233e-01 6.945721e-01
    31 7.231128e-01 7.120427e-01 6.744737e-01 6.652944e-01
    32 7.109740e-01 6.901852e-01 6.323551e-01 6.333215e-01
    33 6.689039e-01 6.648245e-01 6.081355e-01 5.940924e-01
    34 6.620104e-01 6.290888e-01 5.696052e-01 5.600788e-01
    35 6.081394e-01 6.091845e-01 5.537072e-01 5.293010e-01
    36 5.582021e-01 5.390331e-01 5.428711e-01 4.938301e-01
    37 5.319616e-01 5.295989e-01 5.225269e-01 4.795038e-01
    38 5.014983e-01 5.006438e-01 4.516244e-01 4.392713e-01
    39 4.420860e-01 4.347481e-01 4.333064e-01 4.187873e-01
    40 4.114169e-01 4.181080e-01 4.021367e-01 3.979616e-01
    41 3.874238e-01 3.898705e-01 3.828558e-01 3.690529e-01
    42 3.486825e-01 3.533748e-01 3.434483e-01 3.621407e-01
    43 3.360963e-01 3.342303e-01 3.323025e-01 3.413664e-01
    44 3.165981e-01 3.117444e-01 3.055158e-01 2.837513e-01
    45 2.771373e-01 2.722387e-01 2.561093e-01 2.629166e-01
    46 2.145678e-01 2.205574e-01 2.196758e-01 2.331539e-01
    47 2.053996e-01 2.061221e-01 2.008128e-01 2.193117e-01
    48 1.849604e-01 1.829260e-01 1.674118e-01 1.724306e-01
    }\datatable

    \begin{tikzpicture}
        \begin{axis}[
            width=0.65\linewidth,
            height=5cm,
            ymode=log, 
            xlabel={Index},
            ylabel={Eigenvalue Magnitude},
            grid=major,
            grid style={dashed, gray!30},
            legend pos=north east,
            legend style={draw=none, fill=none, font=\small},
            axis lines=left,
            thick,
            xmin=1, xmax=26,
        ]
        
        \addplot[
            color=red!80!black, 
            mark=*, 
            mark size=0.7pt,
            line width=1.2pt
        ] table [x=Component, y=AlignedBeta0] {\datatable};
        \addlegendentry{CNA + OT ($\beta = 0$)}

        \addplot[
            color=orange!90!black, 
            mark=*, 
            mark size=0.7pt,
            line width=1.2pt
        ] table [x=Component, y=AlignedBeta1] {\datatable};
        \addlegendentry{CNA + OT ($\beta = 1$)}
        
        \addplot[
            color=blue!80!black, 
            mark=*, 
            mark size=0.7pt,
            line width=1pt
        ] table [x=Component, y=AlignedBeta5] {\datatable};
        \addlegendentry{CNA + OT ($\beta = 5$)}
        
        \addplot[
            color=gray, 
            dashed, 
            mark=*, 
            mark size=0.1pt,
            mark options={solid}, 
            line width=1.2pt
        ] table [x=Component, y=Random] {\datatable};
        \addlegendentry{Random Noise}
        
        \end{axis}
    \end{tikzpicture}
    \caption{Eigenvalue decay (log scale) of the source noise distributions across different regularization strengths.}
    \label{fig:eigen_decay_beta}
\end{figure}

\subsection{Computational Efficiency}
\label{sec:comp_efficiency}
We analyze the scalability of our method via training throughput as shown in Table \ref{tab:runtime_comparison}. CNA introduces minimal overhead relative to independent couplings. Because the inner optimization converges in just a few steps, the actual time penalty is practically negligible, costing less than a ${\sim}2\%$ drop in throughput. Even at larger batch sizes ($N=1024$) where the exact OT solver becomes a noticeable bottleneck, our contrastive updates scale effortlessly without adding any extra drag.

\begin{table}[htbp]
    \centering
    \small
    \caption{\textbf{Training throughput comparison} (Iterations / s, measured on Nvidia RTX 4090).}
    \label{tab:runtime_comparison}
    \begin{tabular}{@{} l cccc @{}}
    \toprule
    Dataset & I-CFM & CNA (ours) & OT-CFM & CNA + OT (ours) \\
    \midrule
    CIFAR-10 ($N=128$)    & 6.68 & 6.58 & 6.65 & 6.56 \\
    ImageNet32 ($N=1024$) & 1.77 & 1.75 & 1.31 & 1.30 \\
    \bottomrule
    \end{tabular}
\end{table}





\subsection{Post-Hoc Path Regulation via \texorpdfstring{$\beta$}{Beta}-Conditioning}
\label{sec:beta_conditioning_appendix}

We train an adaptive variant of our model where the entropy regularization weight $\beta$ is sampled uniformly from the interval $[1, 6]$ during training. To handle this continuous conditioning, we compute a sinusoidal embedding of $\beta$ passed through a 2-layer MLP $\psi(\beta)$, which is directly added to the standard time embedding $\phi(t)$. This explicitly conditions the learned vector field $v_\theta(x_t, t, \beta)$ on the regularization parameter. 

Intuitively, varying $\beta$ parameterizes a family of transport trajectories between the prior and the data. This allows the model to learn multiple routing strategies, ranging from straight, highly regularized paths to complex, entropic ones.

At inference time, we sample directly from a standard Gaussian prior $x_0 \sim \mathcal{N}(0, \mathbf{I})$ and supply a fixed $\beta$ to guide the generation path. Table \ref{tab:beta_results} presents the CIFAR-10 results. Tuning $\beta$ at inference shifts the performance profile across different step budgets, allowing users to dynamically balance generation quality and path alignment post-hoc without retraining the model.

\subsection{Qualitative Analysis of Prior-Data Alignment}
\label{sec:appendix_alignment_vis}

Figure \ref{fig:noise_alignment_appendix} provides a visual demonstration of noise-data alignment. While standard Gaussian noise (middle row) shares near-zero spatial correlation with the target image, our aligned noise achieves a much higher cosine similarity with the target data while preserving its Gaussian characteristics.

\begin{figure}[!htbp]
    \centering
    \setlength{\tabcolsep}{1.5pt}
    
    \scalebox{0.95}{
    \begin{tabular}{c cccccccc}
        
        \rotatebox{90}{\makebox[1.5cm][c]{\small \textbf{Original}}} &
        \includegraphics[width=0.11\linewidth]{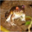} &
        \includegraphics[width=0.11\linewidth]{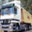} &
        \includegraphics[width=0.11\linewidth]{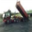} &
        \includegraphics[width=0.11\linewidth]{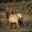} &
        \includegraphics[width=0.11\linewidth]{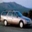} &
        \includegraphics[width=0.11\linewidth]{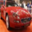} &
        \includegraphics[width=0.11\linewidth]{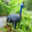} &
        \includegraphics[width=0.11\linewidth]{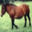} \\
        
        \noalign{\vskip 15pt} 
        
        \rotatebox{90}{\makebox[1.5cm][c]{\small \textbf{Initial Noise}}} &
        \includegraphics[width=0.11\linewidth]{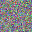} &
        \includegraphics[width=0.11\linewidth]{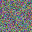} &
        \includegraphics[width=0.11\linewidth]{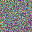} &
        \includegraphics[width=0.11\linewidth]{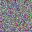} &
        \includegraphics[width=0.11\linewidth]{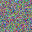} &
        \includegraphics[width=0.11\linewidth]{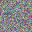} &
        \includegraphics[width=0.11\linewidth]{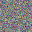} &
        \includegraphics[width=0.11\linewidth]{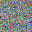} \\
        
        & \scriptsize $r = -0.023$ & \scriptsize $r = -0.019$ & \scriptsize $r = 0.003$ & \scriptsize $r = -0.000$ & \scriptsize $r = 0.012$ & \scriptsize $r = -0.018$ & \scriptsize $r = -0.006$ & \scriptsize $r = -0.014$ \\

        \noalign{\vskip 2pt}
        
        \rotatebox{90}{\makebox[1.5cm][c]{\small \textbf{Aligned}}} &
        \includegraphics[width=0.11\linewidth]{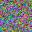} &
        \includegraphics[width=0.11\linewidth]{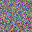} &
        \includegraphics[width=0.11\linewidth]{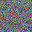} &
        \includegraphics[width=0.11\linewidth]{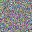} &
        \includegraphics[width=0.11\linewidth]{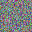} &
        \includegraphics[width=0.11\linewidth]{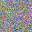} &
        \includegraphics[width=0.11\linewidth]{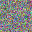} &
        \includegraphics[width=0.11\linewidth]{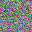} \\

        & \scriptsize $r = 0.072$ & \scriptsize $r = 0.085$ & \scriptsize $r = 0.061$ & \scriptsize $r = 0.048$ & \scriptsize $r = 0.048$ & \scriptsize $r = 0.046$ & \scriptsize $r = 0.059$ & \scriptsize $r = 0.073$ \\
        
    \end{tabular}
    }
    
    \vspace{0.5em}
    \caption{Visual comparison of original CIFAR-10 data (top), standard initial Gaussian noise (middle), and the corresponding optimized noise (bottom). The \textit{cosine similarity $r$} between the original image and the noise is reported below each sample.}
    \label{fig:noise_alignment_appendix}
\end{figure}




\label{sec:latent_space_generation}

\section{Few-Step Sampling Quality}
\label{sec:few_step_quality}

To illustrate how CNA performs in the few-step regime, Figure \ref{fig:appendix_visual_comparison} compares qualitative results across different sampling budgets. As seen in the figure, CNA consistently produces higher quality samples than the baselines, especially when constrained to just 2 or 4 steps.

\section{Additional Qualitative Results}
\label{app:E}
We provide additional qualitative examples that are generated using CNA. All images are produced with the Euler solver at 64 sampling steps. Results for CIFAR-10 and ImageNet32 are shown in Figures \ref{fig:cifar10_10x10} and \ref{fig:imagenet32_10x10}, respectively.

\section{Use of Large Language Models}
\label{app:F}
We used Large Language Models (LLMs) solely for the refinement and polishing of this paper. The core research, including the methodological formulation, experimental design, and empirical analysis, remains the exclusive work of the authors.

\begin{figure}[!htbp]
    \centering
    \small

    \newcommand{\drawsample}[1]{
        \setlength{\tabcolsep}{1.2pt}
        \adjustbox{max height=0.8\textheight, max width=0.7\textwidth}{
        \begin{tabular}{ccccc}
            & \small \textbf{1 NFE} & \small \textbf{2 NFE} & \small \textbf{4 NFE} & \small \textbf{8 NFE} \\

            \rotatebox{90}{\makebox[1.1cm][c]{\small \textbf{I-CFM}}} &
            \includegraphics[width=0.18\linewidth]{figures/Baseline/fake_#1_1NFE.png} &
            \includegraphics[width=0.18\linewidth]{figures/Baseline/fake_#1_2NFE.png} &
            \includegraphics[width=0.18\linewidth]{figures/Baseline/fake_#1_4NFE.png} &
            \includegraphics[width=0.18\linewidth]{figures/Baseline/fake_#1_8NFE.png} \\

            \noalign{\vskip 2pt}

            \rotatebox{90}{\makebox[1.1cm][c]{\small \textbf{OT-CFM}}} &
            \includegraphics[width=0.18\linewidth]{figures/BatchOT/fake_#1_1NFE.png} &
            \includegraphics[width=0.18\linewidth]{figures/BatchOT/fake_#1_2NFE.png} &
            \includegraphics[width=0.18\linewidth]{figures/BatchOT/fake_#1_4NFE.png} &
            \includegraphics[width=0.18\linewidth]{figures/BatchOT/fake_#1_8NFE.png} \\

            \noalign{\vskip 2pt}

            \rotatebox{90}{\makebox[1.1cm][c]{\small \textbf{Ours}}} &
            \includegraphics[width=0.18\linewidth]{figures/Ours/fake_#1_1NFE.png} &
            \includegraphics[width=0.18\linewidth]{figures/Ours/fake_#1_2NFE.png} &
            \includegraphics[width=0.18\linewidth]{figures/Ours/fake_#1_4NFE.png} &
            \includegraphics[width=0.18\linewidth]{figures/Ours/fake_#1_8NFE.png} \\
        \end{tabular}
        }
        \vspace{4mm} 
    }

    \begin{subfigure}[t]{0.48\textwidth}
        \centering
        \drawsample{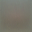}
        \drawsample{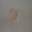}
        \drawsample{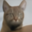}
        \drawsample{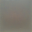}
        \caption{CIFAR-10}
    \end{subfigure}
    \hfill
    \begin{subfigure}[t]{0.48\textwidth}
        \centering
        \drawsample{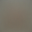}
        \drawsample{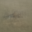}
        \drawsample{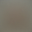}
        \drawsample{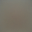}
        \caption{ImageNet32}
    \end{subfigure}

    \caption{\textbf{Qualitative comparison across different step counts.} Comparison of sampling trajectories for CIFAR-10 (left) and ImageNet32 (right). At 1 and 2 NFEs, our method maintains significantly better global structure than the baseline variants.}
    \label{fig:appendix_visual_comparison}
\end{figure}

\begin{figure}[ht]
    \centering
    \offinterlineskip 
    
    \includegraphics[width=0.1\linewidth]{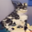}%
    \includegraphics[width=0.1\linewidth]{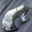}%
    \includegraphics[width=0.1\linewidth]{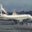}%
    \includegraphics[width=0.1\linewidth]{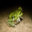}%
    \includegraphics[width=0.1\linewidth]{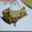}%
    \includegraphics[width=0.1\linewidth]{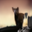}%
    \includegraphics[width=0.1\linewidth]{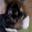}%
    \includegraphics[width=0.1\linewidth]{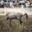}%
    \includegraphics[width=0.1\linewidth]{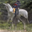}%
    \includegraphics[width=0.1\linewidth]{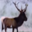}\\
    
    \includegraphics[width=0.1\linewidth]{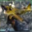}%
    \includegraphics[width=0.1\linewidth]{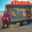}%
    \includegraphics[width=0.1\linewidth]{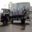}%
    \includegraphics[width=0.1\linewidth]{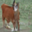}%
    \includegraphics[width=0.1\linewidth]{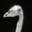}%
    \includegraphics[width=0.1\linewidth]{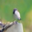}%
    \includegraphics[width=0.1\linewidth]{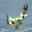}%
    \includegraphics[width=0.1\linewidth]{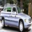}%
    \includegraphics[width=0.1\linewidth]{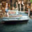}%
    \includegraphics[width=0.1\linewidth]{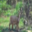}\\
    
    \includegraphics[width=0.1\linewidth]{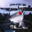}%
    \includegraphics[width=0.1\linewidth]{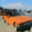}%
    \includegraphics[width=0.1\linewidth]{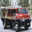}%
    \includegraphics[width=0.1\linewidth]{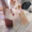}%
    \includegraphics[width=0.1\linewidth]{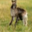}%
    \includegraphics[width=0.1\linewidth]{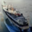}%
    \includegraphics[width=0.1\linewidth]{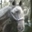}%
    \includegraphics[width=0.1\linewidth]{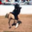}%
    \includegraphics[width=0.1\linewidth]{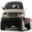}%
    \includegraphics[width=0.1\linewidth]{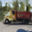}\\
    
    \includegraphics[width=0.1\linewidth]{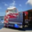}%
    \includegraphics[width=0.1\linewidth]{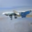}%
    \includegraphics[width=0.1\linewidth]{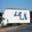}%
    \includegraphics[width=0.1\linewidth]{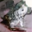}%
    \includegraphics[width=0.1\linewidth]{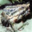}%
    \includegraphics[width=0.1\linewidth]{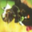}%
    \includegraphics[width=0.1\linewidth]{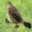}%
    \includegraphics[width=0.1\linewidth]{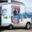}%
    \includegraphics[width=0.1\linewidth]{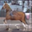}%
    \includegraphics[width=0.1\linewidth]{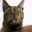}\\
    
    \includegraphics[width=0.1\linewidth]{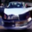}%
    \includegraphics[width=0.1\linewidth]{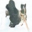}%
    \includegraphics[width=0.1\linewidth]{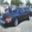}%
    \includegraphics[width=0.1\linewidth]{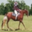}%
    \includegraphics[width=0.1\linewidth]{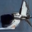}%
    \includegraphics[width=0.1\linewidth]{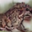}%
    \includegraphics[width=0.1\linewidth]{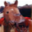}%
    \includegraphics[width=0.1\linewidth]{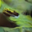}%
    \includegraphics[width=0.1\linewidth]{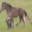}%
    \includegraphics[width=0.1\linewidth]{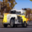}\\
    
    \includegraphics[width=0.1\linewidth]{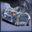}%
    \includegraphics[width=0.1\linewidth]{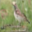}%
    \includegraphics[width=0.1\linewidth]{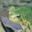}%
    \includegraphics[width=0.1\linewidth]{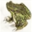}%
    \includegraphics[width=0.1\linewidth]{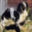}%
    \includegraphics[width=0.1\linewidth]{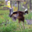}%
    \includegraphics[width=0.1\linewidth]{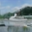}%
    \includegraphics[width=0.1\linewidth]{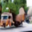}%
    \includegraphics[width=0.1\linewidth]{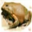}%
    \includegraphics[width=0.1\linewidth]{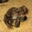}\\
    
    \includegraphics[width=0.1\linewidth]{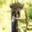}%
    \includegraphics[width=0.1\linewidth]{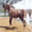}%
    \includegraphics[width=0.1\linewidth]{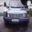}%
    \includegraphics[width=0.1\linewidth]{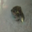}%
    \includegraphics[width=0.1\linewidth]{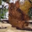}%
    \includegraphics[width=0.1\linewidth]{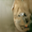}%
    \includegraphics[width=0.1\linewidth]{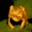}%
    \includegraphics[width=0.1\linewidth]{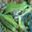}%
    \includegraphics[width=0.1\linewidth]{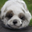}%
    \includegraphics[width=0.1\linewidth]{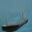}\\
    
    \includegraphics[width=0.1\linewidth]{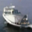}%
    \includegraphics[width=0.1\linewidth]{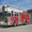}%
    \includegraphics[width=0.1\linewidth]{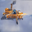}%
    \includegraphics[width=0.1\linewidth]{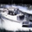}%
    \includegraphics[width=0.1\linewidth]{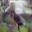}%
    \includegraphics[width=0.1\linewidth]{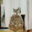}%
    \includegraphics[width=0.1\linewidth]{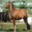}%
    \includegraphics[width=0.1\linewidth]{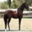}%
    \includegraphics[width=0.1\linewidth]{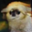}%
    \includegraphics[width=0.1\linewidth]{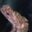}\\
    
    \includegraphics[width=0.1\linewidth]{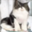}%
    \includegraphics[width=0.1\linewidth]{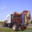}%
    \includegraphics[width=0.1\linewidth]{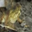}%
    \includegraphics[width=0.1\linewidth]{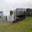}%
    \includegraphics[width=0.1\linewidth]{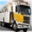}%
    \includegraphics[width=0.1\linewidth]{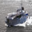}%
    \includegraphics[width=0.1\linewidth]{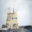}%
    \includegraphics[width=0.1\linewidth]{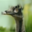}%
    \includegraphics[width=0.1\linewidth]{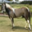}%
    \includegraphics[width=0.1\linewidth]{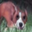}\\
    
    \includegraphics[width=0.1\linewidth]{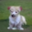}%
    \includegraphics[width=0.1\linewidth]{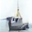}%
    \includegraphics[width=0.1\linewidth]{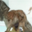}%
    \includegraphics[width=0.1\linewidth]{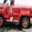}%
    \includegraphics[width=0.1\linewidth]{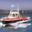}%
    \includegraphics[width=0.1\linewidth]{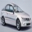}%
    \includegraphics[width=0.1\linewidth]{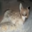}%
    \includegraphics[width=0.1\linewidth]{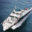}%
    \includegraphics[width=0.1\linewidth]{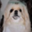}%
    \includegraphics[width=0.1\linewidth]{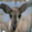}
    
    \caption{\textbf{Uncurated 32$\times$32 samples on CIFAR-10} ($\beta = 5$).}
    \label{fig:cifar10_10x10}
\end{figure}

\begin{figure}[ht]
    \centering
    \offinterlineskip 
    
    \includegraphics[width=0.1\linewidth]{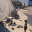}%
    \includegraphics[width=0.1\linewidth]{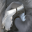}%
    \includegraphics[width=0.1\linewidth]{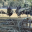}%
    \includegraphics[width=0.1\linewidth]{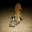}%
    \includegraphics[width=0.1\linewidth]{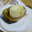}%
    \includegraphics[width=0.1\linewidth]{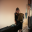}%
    \includegraphics[width=0.1\linewidth]{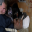}%
    \includegraphics[width=0.1\linewidth]{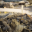}%
    \includegraphics[width=0.1\linewidth]{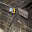}%
    \includegraphics[width=0.1\linewidth]{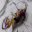}\\
    
    \includegraphics[width=0.1\linewidth]{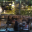}%
    \includegraphics[width=0.1\linewidth]{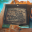}%
    \includegraphics[width=0.1\linewidth]{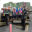}%
    \includegraphics[width=0.1\linewidth]{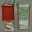}%
    \includegraphics[width=0.1\linewidth]{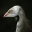}%
    \includegraphics[width=0.1\linewidth]{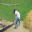}%
    \includegraphics[width=0.1\linewidth]{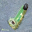}%
    \includegraphics[width=0.1\linewidth]{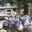}%
    \includegraphics[width=0.1\linewidth]{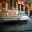}%
    \includegraphics[width=0.1\linewidth]{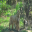}\\
    
    \includegraphics[width=0.1\linewidth]{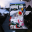}%
    \includegraphics[width=0.1\linewidth]{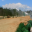}%
    \includegraphics[width=0.1\linewidth]{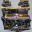}%
    \includegraphics[width=0.1\linewidth]{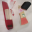}%
    \includegraphics[width=0.1\linewidth]{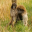}%
    \includegraphics[width=0.1\linewidth]{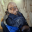}%
    \includegraphics[width=0.1\linewidth]{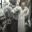}%
    \includegraphics[width=0.1\linewidth]{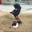}%
    \includegraphics[width=0.1\linewidth]{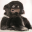}%
    \includegraphics[width=0.1\linewidth]{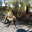}\\
    
    \includegraphics[width=0.1\linewidth]{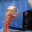}%
    \includegraphics[width=0.1\linewidth]{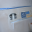}%
    \includegraphics[width=0.1\linewidth]{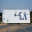}%
    \includegraphics[width=0.1\linewidth]{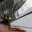}%
    \includegraphics[width=0.1\linewidth]{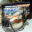}%
    \includegraphics[width=0.1\linewidth]{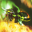}%
    \includegraphics[width=0.1\linewidth]{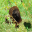}%
    \includegraphics[width=0.1\linewidth]{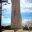}%
    \includegraphics[width=0.1\linewidth]{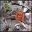}%
    \includegraphics[width=0.1\linewidth]{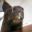}\\
    
    \includegraphics[width=0.1\linewidth]{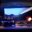}%
    \includegraphics[width=0.1\linewidth]{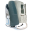}%
    \includegraphics[width=0.1\linewidth]{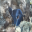}%
    \includegraphics[width=0.1\linewidth]{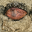}%
    \includegraphics[width=0.1\linewidth]{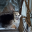}%
    \includegraphics[width=0.1\linewidth]{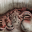}%
    \includegraphics[width=0.1\linewidth]{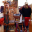}%
    \includegraphics[width=0.1\linewidth]{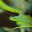}%
    \includegraphics[width=0.1\linewidth]{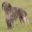}%
    \includegraphics[width=0.1\linewidth]{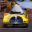}\\
    
    \includegraphics[width=0.1\linewidth]{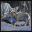}%
    \includegraphics[width=0.1\linewidth]{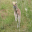}%
    \includegraphics[width=0.1\linewidth]{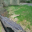}%
    \includegraphics[width=0.1\linewidth]{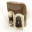}%
    \includegraphics[width=0.1\linewidth]{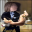}%
    \includegraphics[width=0.1\linewidth]{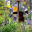}%
    \includegraphics[width=0.1\linewidth]{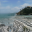}%
    \includegraphics[width=0.1\linewidth]{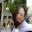}%
    \includegraphics[width=0.1\linewidth]{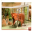}%
    \includegraphics[width=0.1\linewidth]{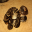}\\
    
    \includegraphics[width=0.1\linewidth]{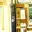}%
    \includegraphics[width=0.1\linewidth]{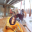}%
    \includegraphics[width=0.1\linewidth]{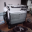}%
    \includegraphics[width=0.1\linewidth]{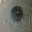}%
    \includegraphics[width=0.1\linewidth]{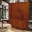}%
    \includegraphics[width=0.1\linewidth]{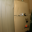}%
    \includegraphics[width=0.1\linewidth]{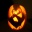}%
    \includegraphics[width=0.1\linewidth]{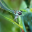}%
    \includegraphics[width=0.1\linewidth]{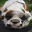}%
    \includegraphics[width=0.1\linewidth]{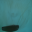}\\
    
    \includegraphics[width=0.1\linewidth]{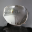}%
    \includegraphics[width=0.1\linewidth]{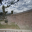}%
    \includegraphics[width=0.1\linewidth]{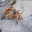}%
    \includegraphics[width=0.1\linewidth]{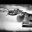}%
    \includegraphics[width=0.1\linewidth]{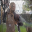}%
    \includegraphics[width=0.1\linewidth]{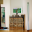}%
    \includegraphics[width=0.1\linewidth]{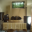}%
    \includegraphics[width=0.1\linewidth]{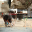}%
    \includegraphics[width=0.1\linewidth]{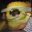}%
    \includegraphics[width=0.1\linewidth]{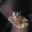}\\
    
    \includegraphics[width=0.1\linewidth]{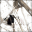}%
    \includegraphics[width=0.1\linewidth]{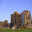}%
    \includegraphics[width=0.1\linewidth]{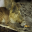}%
    \includegraphics[width=0.1\linewidth]{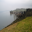}%
    \includegraphics[width=0.1\linewidth]{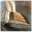}%
    \includegraphics[width=0.1\linewidth]{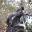}%
    \includegraphics[width=0.1\linewidth]{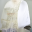}%
    \includegraphics[width=0.1\linewidth]{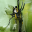}%
    \includegraphics[width=0.1\linewidth]{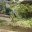}%
    \includegraphics[width=0.1\linewidth]{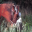}\\
    
    \includegraphics[width=0.1\linewidth]{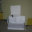}%
    \includegraphics[width=0.1\linewidth]{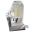}%
    \includegraphics[width=0.1\linewidth]{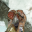}%
    \includegraphics[width=0.1\linewidth]{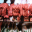}%
    \includegraphics[width=0.1\linewidth]{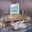}%
    \includegraphics[width=0.1\linewidth]{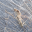}%
    \includegraphics[width=0.1\linewidth]{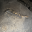}%
    \includegraphics[width=0.1\linewidth]{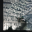}%
    \includegraphics[width=0.1\linewidth]{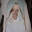}%
    \includegraphics[width=0.1\linewidth]{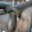}
    
    \caption{\textbf{Uncurated 32$\times$32 samples on ImageNet32} ($\beta = 30$)}
    \label{fig:imagenet32_10x10}
\end{figure}

\end{document}